\providecommand{\algorithmname}{Algorithm}
\theoremstyle{plain}
\newtheorem{thm}{\protect\theoremname}
\theoremstyle{definition}
\theoremstyle{plain}
\newtheorem{lem}[thm]{\protect\lemmaname}
\theoremstyle{plain}
\newenvironment{proof}[1][\protect\proofname]{\par
	\normalfont\topsep6\p@\@plus6\p@\relax
	\trivlist
	\itemindent\parindent
	\item[\hskip\labelsep\scshape #1]\ignorespaces
}{%
	\endtrivlist\@endpefalse
}
\providecommand{\proofname}{Proof}
\providecommand{\corollaryname}{Corollary}
\providecommand{\definitionname}{Definition}
\providecommand{\lemmaname}{Lemma}
\providecommand{\theoremname}{Theorem}
\providecommand{\propositionname}{Proposition}
\begin{document}

\title{Robust Gaussian Process Regression for Real-Time High Precision GPS Signal Enhancement}

\author[1]{Ming Lin \thanks{ming.l@alibaba-inc.com}}
\author[1]{Xiaomin Song \thanks{xiaomin.song@alibaba-inc.com}}
\author[1]{Qi Qian \thanks{qi.qian@alibaba-inc.com}}
\author[2]{Hao Li \thanks{lihao.lh@alibaba-inc.com}}
\author[1]{Liang Sun \thanks{liang.sun@alibaba-inc.com}}
\author[1]{Shenghuo Zhu \thanks{shenghuo.zhu@alibaba-inc.com}}
\author[1]{Rong Jin \thanks{jinrong.jr@alibaba-inc.com}}
\affil[1]{Alibaba Group, Bellevue, Washington, USA.}
\affil[2]{Alibaba Group, Hangzhou, Zhejiang, China.}

%

\maketitle
\begin{abstract}
  Satellite-based positioning system such as GPS often suffers from large amount of
  noise that degrades the positioning accuracy dramatically especially in real-time
  applications. In this work, we consider a data-mining approach to enhance the GPS
  signal. We build a large-scale high precision GPS receiver grid system to collect
  real-time GPS signals for training. The Gaussian Process (GP) regression is chosen to
  model the vertical Total Electron Content (vTEC) distribution of the ionosphere of the
  Earth. Our experiments show that the noise in the real-time GPS signals often exceeds
  the breakdown point of the conventional robust regression methods resulting in
  sub-optimal system performance. We propose a three-step approach to address this
  challenge. In the first step we perform a set of signal validity tests to separate the
  signals into clean and dirty groups. In the second step, we train an initial model on
  the clean signals and then reweigting the dirty signals based on the residual error. A
  final model is retrained on both the clean signals and the reweighted dirty signals.
  In the theoretical analysis, we prove that the proposed three-step approach is able to
  tolerate much higher noise level than the vanilla robust regression methods if two
  reweighting rules are followed. We validate the superiority of the proposed method in
  our real-time high precision positioning system against several popular
  state-of-the-art robust regression methods. Our method achieves centimeter positioning
  accuracy in the benchmark region with probability $78.4\%$ , outperforming the second
  best baseline method by a margin of $8.3\%$. The benchmark takes 6 hours on 20,000 CPU
  cores or 14 years on a single CPU.
\end{abstract}


\section{Introduction}

Real-time high precision positioning service is a critical core component
in modern AI-driven industries, such as self-driving cars, autonomous
drones and so on. The Global Positioning System (GPS) is inarguably
the most popular, if not the only for now, satellite-based positioning
system accessible to public all around the world. The major performance
index of GPS based system is the positioning accuracy. When using
civilian level smart phones with GPS enabled, the horizontal positioning
accuracy is around 5 meters. Even equipped with high quality, single
frequency GPS receiver, the horizontal accuracy is typically within
1.891 meters with 95\% probability\footnote{Data source \url{https://www.gps.gov/systems/gps/performance/accuracy/}}.
While it might be sufficient for daily usage, the meter level accuracy
is insufficient for many modern autonomous applications. Previously
a popular method to achieve the centimeter accuracy is to fix the
GPS receiver at one point for several hours. This method is clearly
infeasible in mobile applications. How to achieve centimeter accuracy
in real-time is therefore an open challenge in this domain.

In this work, we develop a data mining approach to address the above
challenge. The majority of the noise in GPS signal is the signal delay
caused by the ionosphere which can be described by the vertical Total
Electron Content (vTEC). To eliminate this delay, we build a grid
system where each node in the grid is a ground station equipped with
high precision multiple frequency GPS receiver. A Gaussian Process
model (GP) is learned to predict the vTEC value for any given geographic
coordinate. When a mobile client requests positioning service, its
GPS signal is calibrated by our GP model every second.

A critical problem in the model training step is the large amount
of noisy data. As validated in our experiment, there are 20\%$\sim$40\%
outliers in the GPS signal received by our grid system. Directly applying
off-the-shelf robust regression methods cannot achieve the required
performance because existing methods are either non-consistent or
are not suitable to tolerate high corruption rate. To overcome this
difficulty, we develop a screening algorithm to detect outliers and
split the received signal into clean and noisy subsets. The GP model
is then trained on the clean dataset.

However, only training on the clean dataset has an obvious drawback.
The screening algorithm is not always reliable. It often over-kills
clean data. If the dirty dataset is completely discarded after screening,
we cannot collect sufficient clean data for robust prediction. On
the other hand, each data point is collected from an expensive high
precision GPS receiver therefore simply discarding the noisy dataset
is a great waste. In the worst case nearly 40\% data points will be
marked as noisy in one day. This number is even higher $(>60\%)$
at noon in summer which results in a low positioning accuracy or even
interruption of our service.

In order to address the above problem, we formulate the problem as
a robust regression problem. We are considering a scenario where we
are provided a clean dataset and a noisy dataset by an unreliable
screening algorithm. The noise in the clean dataset is sub-gaussian
while the noise in the noisy dataset is heavy-tailed. The volume of
the noisy dataset might be infinite. We aim to design an efficient
and robust algorithm to boost the model performance by learning on
both clean and noisy datasets. Under this setting, we find that existing
robust regression methods cannot be applied directly due to their
small breakdown points or due to their inconsistency. To this end,
we propose a three-step algorithm. Our algorithm first learns an initial
estimator only using the clean dataset. Then we apply the estimator
on the noisy dataset to filter out those of large residual error.
The remaining noisy data points are reweighted according to their
residual error and finally a robust estimator is retrained on the
reweighted noisy data points in addition to the clean dataset together.
We call this approach as Filter-Reweight-Retrain (FRR). Our theoretical
analysis shows that the three steps in FRR is\emph{ not only sufficient
but actually necessary}. The filtering step truncates the noise level
in the noisy dataset. The reweighting step reduces the variance of
the noise. When the volume of the noisy dataset is infinite, FRR is
consistent which means that it achieves zero recovery error. When
the volume of the noisy dataset is finite, a reweighting scheme is
proposed to improve the asymptotic recovery error bound. While many
previous robust regression methods involve iterative retraining and
reweighting, we show that FRR does not need multiple iterations. In
fact, our analysis suggests that designing an effective reweighting-retraining
scheme is non-trivial due to the risk of over-fitting. We give two
general rules to avoid over-fitting in the reweighting step.

The remainder of this work is organized as following. In Section 2
we briefly survey related works. Section 3 introduces some background
knowledge of our GPS positioning system and our main algorithm, robust
Gaussian process regression with FRR. Theoretical analysis is given
in Section 4. We demonstrate the performance of our method in real-time
GPS positioning system in Section 5. Section 6 encloses this work.

\section{Related Work}

The vTEC contributes the majority of noise in the GPS signal received
by ground stations. Conventional ionosphere scientific researches
focus on the static estimation of vTEC. For example, \cite{sardon1994estimation,mannucci1998global}
used Kalman filter to vTEC based on GPS signals received by multiple
ground stations.\cite{arikan2003regularized} proposed a high pass
penalty regularizer to smooth the estimated vTEC values. \cite{Mukesh2019}
described the vTEC distribution by an ordinary kriging (OK)-based
surrogate model. The real-time vTEC estimation was considered in \cite{huang2001vertical}.
In \cite{Renga2018}, the authors proposed a Linear Thin Shell model
to better describe the horizontal variation of the vTEC distribution
in real-time. Several researches introduce multiple receivers to jointly
estimate the vTEC. In \cite{200263}, the authors used over 1000 dual
frequency receivers to construct a large-scale vTEC map over Japan.
\cite{Zhang2018} used low-cost receivers jointly to improve the vTEC
estimation quality. Comparing to previous researches, we use robust
GP regression to model the vTEC distribution. The GPS signal is collected
from a few hundreds of ground stations in a given region. With modern
hardware and our new algorithm, we are able to report the centimeter
positioning accuracy in real-time.

In regard to the robust regression, we briefly survey recent works
closely related to ours. As there is a vast amount of robust regression
literature, we refer to \cite{Morgenthaler2007} for a more comprehensive
review.

When comparing robust regression methods, we usually consider the
performance of an algorithm from three aspects: breakdown point, consistency
and corruption model. The breakdown point is a real number which measures
the maximum ratio of corruption data the algorithm can tolerance.
A more robust method should have a larger breakdown point. The consistency
tells whether the algorithm is unbiased. An unbiased algorithm should
achieve zero recovery error when provided with infinite training data.
The corruption model makes assumptions about the noise. The conventional
setting assumes that the training data is randomly corrupted by arbitrary
large noise. A more stronger corruption model is the oblivious adversarial
model which assumes that the attacker can add noise at arbitrary location
without looking at the data. The most difficult model is the adaptive
adversarial model. In this model, the attacker is able to review the
data and then adaptively corrupts the data.

Ideally, we wish to find a robust algorithm with breakdown point asymptotically
convergent to $1$ while being consistent under adaptive adversarial
corruption. However, it is impossible to satisfy all the three requirements
\cite{Diakonikolas2018}. A popular approach is to use $\ell_{1}$-norm
in the loss function. The $\ell_{1}$-norm based methods usually have
breakdown point asymptotically convergent to 1 but inconsistent \cite{Wright2010,Nguyen2013}.
\cite{Dalalyan2012} proposed a \emph{second order cone programming}
method (SOCP). The SOCP has breakdown point asymptotically convergent
to $1$ but is inconsistent. \cite{McWilliams2014} proposed to use
weighted subsampling and then iteratively retrain the model. Its breakdown
point is $\mathcal{O}(1/\sqrt{d})$ and the algorithm is known to
be inconsistent. Recently, a branch of \emph{hard iterative thresholding}
(HIT) based methods are proven to be more efficient. \cite{Chen2013}
shows that HIT has breakdown point on order of $\mathcal{O}(1/\sqrt{d}$).
\cite{Bhatia2015} proposed a variant of HIT with breakdown point
convergent to $1$ but inconsistent. The first consistent HIT algorithm
is proposed by \cite{Bhatia2017} with breakdown point $\mathcal{O}(10^{-4})$.
The corruption models in the above works are either oblivious or adaptive.

Comparing to the above works, our setting is new where we are given
two datasets to train our model rather than one mixed with clean and
corrupted data. In our setting, the volume of the noisy dataset could
be infinite which means the breakdown point of our algorithm is asymptotically
convergent to 1. Our algorithm is consistent when the volume of the
noisy dataset is infinite.

\section{Our System and Algorithm}

This section consists of two subsections. In subsection \ref{subsec:How-GPS-positioning},
we describe our Global Navigation Satellite System (GNSS) behind the
scene. We give our FRR algorithm in subsection \ref{subsec:Filter-Reweight-Retrain-Meta-Alg}.

\subsection{Global Navigation Satellite System \label{subsec:How-GPS-positioning}}

\begin{figure*}
\begin{minipage}[b][][b]{0.32 \linewidth}
\includegraphics[width=\linewidth]{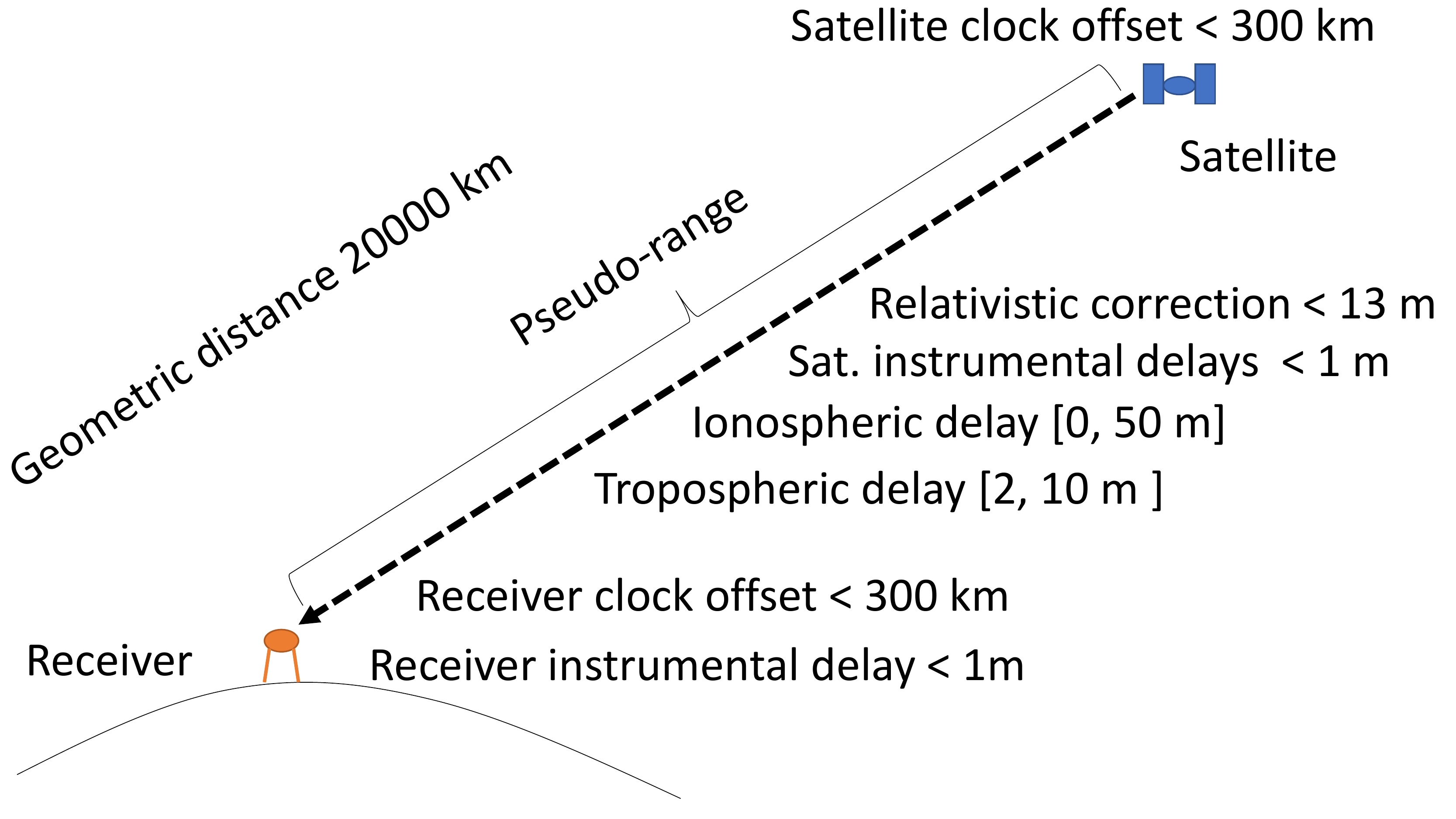}\\
\begin{center}
(a)
\end{center}
\end{minipage} \hfil
\begin{minipage}[b][][b]{0.32 \linewidth}
\includegraphics[width=\linewidth]{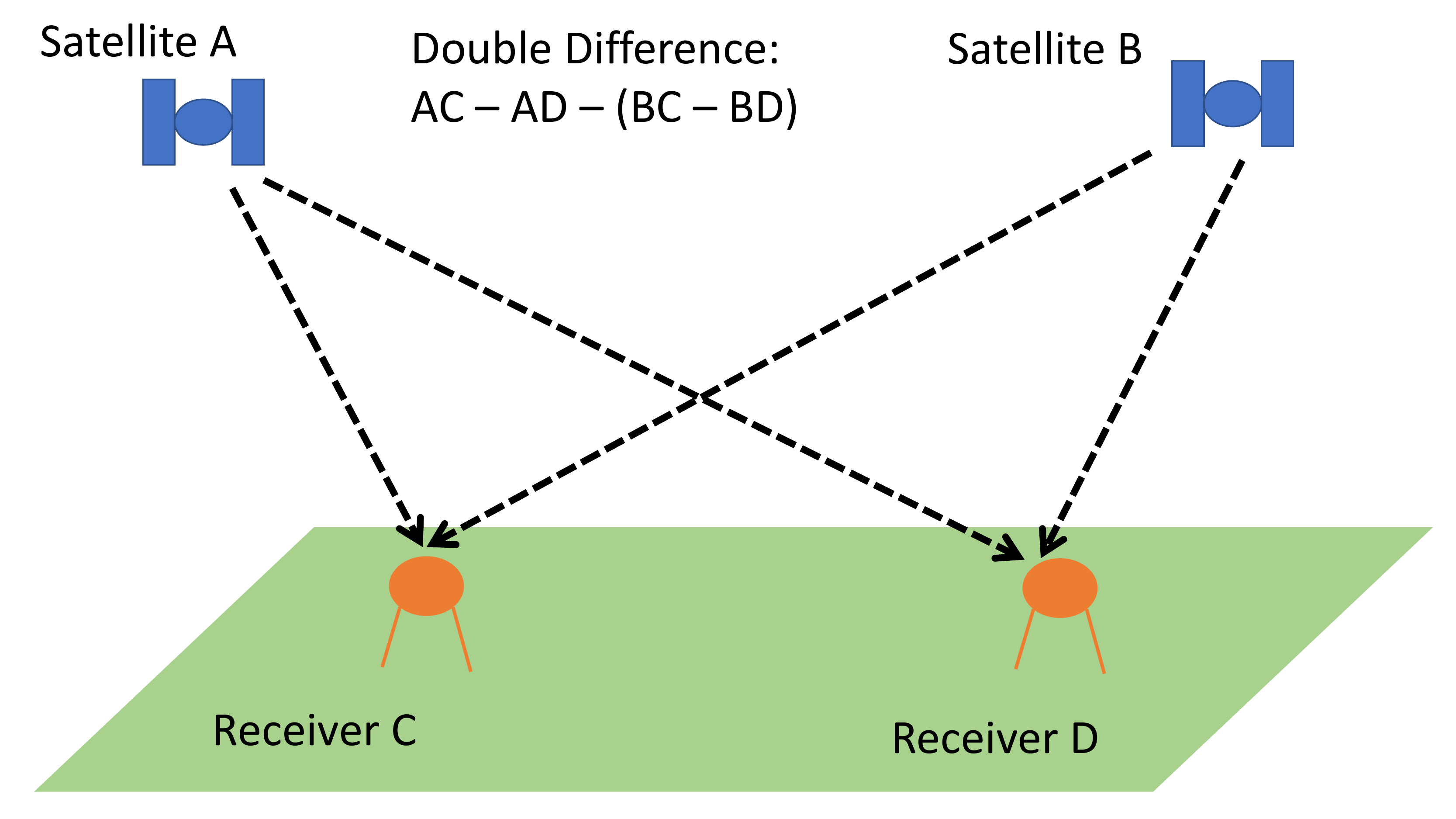}\\
\begin{center}
(b)
\end{center}
\end{minipage} \hfil
\begin{minipage}[b][][b]{0.32 \linewidth}
\includegraphics[width=\linewidth]{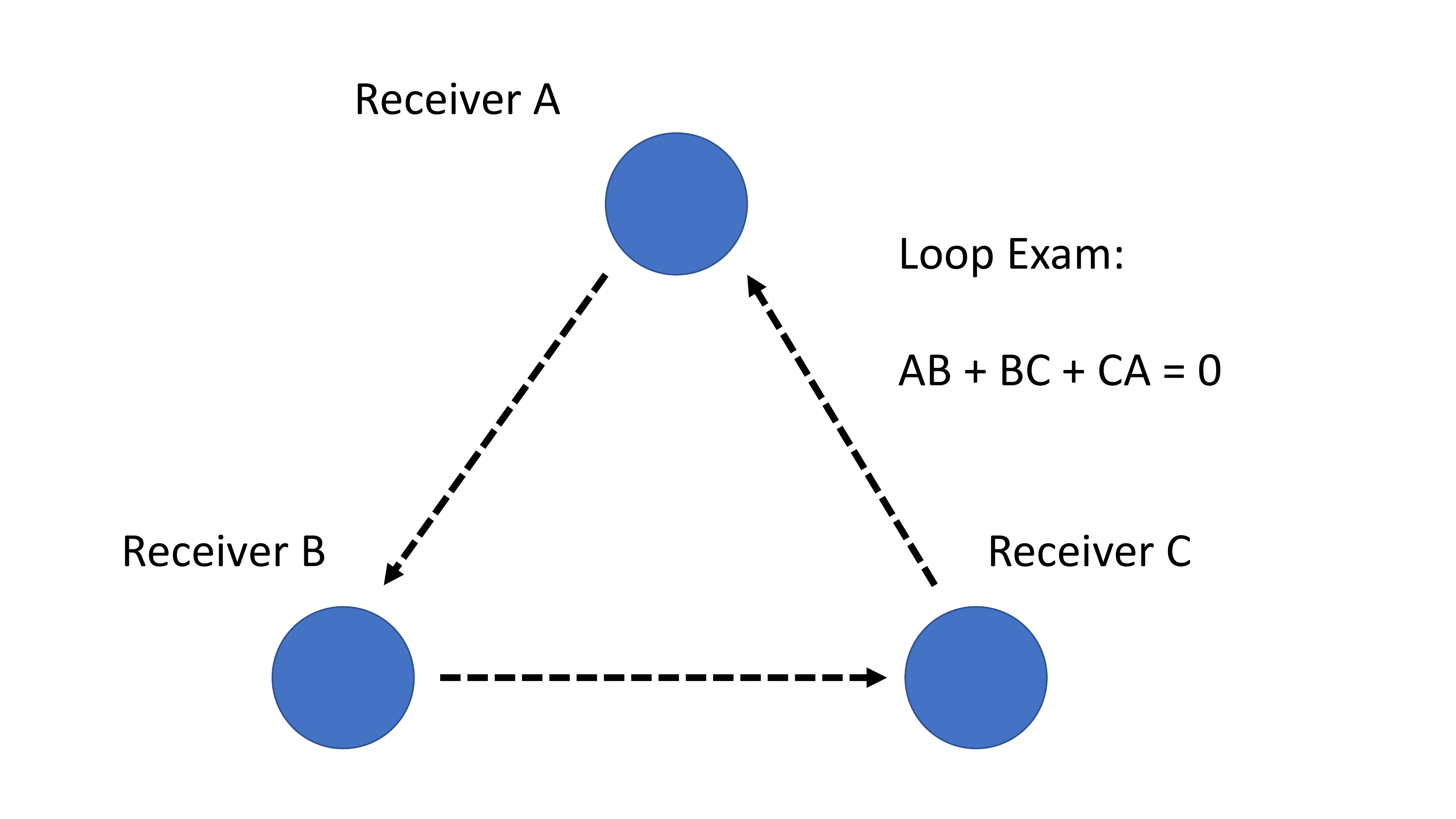}\\
\begin{center}
(c)
\end{center}
\end{minipage}\caption{GNSS Background}
\label{fig:gnss_background}
\end{figure*}

A global navigation satellite system (GNSS) is a satellite navigation
system that provides positioning service with global coverage, including
GPS from the United States, GLONASS from Russia, BeiDou from China
and Galileo from European Union. In GNSS, the satellite positions
are known. With high precision distance measurement between user and
more than 4 satellites, the user\textquoteright s location can be
determined \cite{Hofmann-Wellenhof1993}. The distance is calculated
from the time interval between signal emitting from the satellite
and signal receiving by the user, times the speed of microwave transmission
through the path from the satellite to the user.

In a high precision positioning system, the major error terms along
the path of microwave transmission are shown in Figure \ref{fig:gnss_background}(a).
The pseudorange, a measure of distance between the satellite and the
receiver, is obtained through the correlation of the modulated code
in the received signal from the satellite with the same code generated
in the receiver. The pseudorange is affected by the following factors:
\begin{itemize}
\item The Euclidean distance between satellite position at signal emission
and the receiver at the signal reception.
\item Offsets of receiver clock and satellite clock. They are the clock
synchronism errors of the receiver and satellite referring to GNSS
time. They are associated with the specific receiver or satellite.
\item Relativistic correction which can be modeled very well.
\item Tropospheric delay which is frequency independent.
\item Ionospheric delay which is frequency dependent.
\item Instrumental delay which is frequency dependent and can be model very
well.
\item Multi-path effect. This error can be minimized by high quality antenna.
\end{itemize}
Most of errors like clock error and instrumental delay can be mitigated
by double differencing and multiple frequencies \cite{Pajares2005}.
The computation of double difference is shown in Figure \ref{fig:gnss_background}(b).
Biases due to orbit and atmosphere are significantly reduced due to
the similar path of transmissions. As the ionospheric delays are spatially
correlated, the between-receiver differenced ionospheric bias is much
less than its absolute values.

After double difference, the residual tropospheric delays are rather
small since the tropospheric delays can be at least 90\% corrected
using the empirical model \cite{collins1997tropospheric}. The remaining
error is dominated by the modeling of the ionosphere. The ionosphere
is in the Earth's upper atmosphere. It can be modeled as a thin layer
of ionized electrons at 350 kilometers above sea level \cite{Klobuchar1987}.
The vertical Total Electron Content (vTEC) is an important descriptive
parameter of the ionosphere. It integrates the total number of electrons
along a perpendicular path through the ionosphere. The additional
phase and group delay of microwave signal accumulated during transmission
through ionosphere are proportional to the vTEC. The vTEC is impacted
by the nearly unpredictable solar flare which brings inherently high
temporal and spatial fluctuations \cite{Meza2009}.

To estimate the vTEC, we use machine learning to learn the vTEC distribution
from double differences received in real-time. However due to the
difficulties in signal resolving, the quality of the received data
varies a lot. To understand this, we must understand how the signal
is resolved. First we solve the double differential equations with
both code and phase using wide-lane combination \cite{Teunissen1998}.
The equations are solved with Kalman filter to get floating ambiguity
\cite{teunissen2003towards}. Then the narrow-lane float ambiguity
is derived from ionosphere-free and wide-lane combination. The integer
phase ambiguity can be found by Least-squares AMBiguity Decorrelation
Adjustment (LAMBDA) method \cite{Teunissen1995}. After finding the
integer ambiguity, solve the double differential phase equations and
get the base-line Ionospheric delays which are frequency dependent
and the Tropospheric delay which are frequency independent.

However, the LAMBDA method does not always find the optimal integer
solutions. We use Closed Loop and n-sigma outlier tests to check if
the solved base-lines are self-consistent as shown in Figure \ref{fig:gnss_background}(c).
\textbf{We mark the data point as clean if it passes the tests otherwise
we mark it as noisy/corrupted.}\emph{ }The two tests work well when
the ionosphere is quiet. In summer or in equatorial region, ionosphere
fluctuates violently and the LAMBDA method does not work well. The
low quality of integer solutions found by LAMBDA drives the Closed
Loop and n-sigma test to over-kill and under-kill data, resulting
in large noise and missing data.

\subsection{Filter-Reweight-Retrain Algorithm For GPS Signal Enhancement \label{subsec:Filter-Reweight-Retrain-Meta-Alg}}

\begin{algorithm}
\begin{algorithmic}[1]

\REQUIRE Clean set $\mathcal{S}$, noisy set $\hat{\mathcal{S}}$,
truncation threshold $\tau$and reweighting function $h(\cdot)$ in
Eq. (\ref{eq:practice-choice-of-tau-alpha}).

\ENSURE Robust estimation $\bar{\boldsymbol{w}}$.

\STATE $\boldsymbol{w}^{\mathrm{init}}=\underset{\boldsymbol{w}}{\mathrm{argmin}}\ \ell(\mathcal{S}|\boldsymbol{w})$.

\STATE $r_{i}=\left|\left\langle \boldsymbol{w}^{\mathrm{init}},\hat{\boldsymbol{x}}^{(i)}\right\rangle -\hat{y}_{i}\right|$,
$\forall(\hat{\boldsymbol{x}}^{(i)},\hat{y}_{i})\in\hat{S}$.

\STATE $\hat{\mathcal{S}}'=\{(\hat{\boldsymbol{x}}^{(i)},\hat{y}_{i})|\ r_{i}\leq\tau,\ \forall(\hat{\boldsymbol{x}}^{(i)},\hat{y}_{i})\in\hat{S}$.

\STATE $\alpha_{i}=h(r_{i})$ if $\forall\hat{\boldsymbol{x}}^{(i)}\in\hat{\mathcal{S}}'$
otherwise $\alpha_{i}=1$.

\STATE Retrain weighted least square
\[
\bar{\boldsymbol{w}}=\underset{\boldsymbol{w}}{\mathrm{argmin}}\sum_{(\boldsymbol{x}^{(i)},y_{i})\in\mathcal{S}\cup\hat{\mathcal{S}}'}\alpha_{i}\left(\left\langle \boldsymbol{w},\boldsymbol{x}^{(i)}\right\rangle -y_{i}\right)^{2}\ .
\]

\STATE \textbf{Output: $\bar{\boldsymbol{w}}$.}

\end{algorithmic}

\caption{Filter-Reweight-Retrain Meta Algorithm}

\label{alg:FRR}
\end{algorithm}

The FRR is applicable to any base estimator that can generate decision
value. For sake of simplicity, we restrict ourselves in $d$ dimensional
linear space. Following the standard settings, suppose that the data
point $\boldsymbol{x}\in\mathbb{R}^{d}$ are sampled from sub-gaussian
distribution such that
\[
\mathbb{E}\boldsymbol{x}=\boldsymbol{0},\ \|\boldsymbol{x}\|_{2}\leq c\sqrt{d\log(2d/\eta)},\ \mathbb{E}\boldsymbol{x}\boldsymbol{x}{}^{\top}=I
\]
 where $c$ is a universal constant, $0<\eta<1$. The label $y\in\mathbb{R}$
is a real number. We are given two datasets consisting of $(\boldsymbol{x},y)$
pairs, the clean set $\mathcal{S}$ and the noisy set $\hat{\mathcal{S}}$.
Both sets are independently and identically (i.i.d.) sampled. In the
clean set $\mathcal{S}$, we assume that
\[
y=\left\langle \boldsymbol{w}^{*},\boldsymbol{x}\right\rangle +\xi\quad\forall(\boldsymbol{x},y)\in\mathcal{S}\ .
\]
 The noise term $\xi$ is i.i.d sampled from sub-gaussian distribution
with mean zero, variance proxy $\sigma$ and boundary $|\xi|\leq\xi_{\max}$.
In the noisy set $\hat{\mathcal{S}}$, the observed label is corrupted
by the noise model
\[
\hat{y}=\left\langle \boldsymbol{w}^{*},\hat{\boldsymbol{x}}\right\rangle +\hat{\xi}\quad\forall(\hat{\boldsymbol{x}},\hat{y})\in\hat{\mathcal{S}}\ .
\]
 The noise term $\hat{\xi}$ is i.i.d. sampled from heavy-tailed distribution.
It is important to emphasize that even the mean value of $\hat{\xi}$
may not exist therefore directly learning on $\hat{\mathcal{S}}$
is impossible. Without loss of generality, we assume that the distribution
of $\hat{\xi}$ is symmetric. Otherwise we add a bias term in the
linear regression model to capture the offset. The volume of the clean
set $\mathcal{S}$ is denoted as $n=|\mathcal{S}|$ and the volume
of the noisy set $m=|\hat{\mathcal{S}}|$ . $|\hat{\mathcal{S}}|/(|\mathcal{S}|+|\hat{\mathcal{S}}|)$
is the noise ratio of the full dataset. We choose linear least square
regression as our base estimator defined by
\[
\widehat{\boldsymbol{w}}=\underset{\boldsymbol{w}}{\mathrm{argmin}}\ \ell(\mathcal{S}|\boldsymbol{w})\triangleq\sum_{(\boldsymbol{x},y)\in\mathcal{S}}(\left\langle \boldsymbol{w},\boldsymbol{x}\right\rangle -y)^{2}\ .
\]
 The recovery error $\|\widehat{\boldsymbol{w}}-\boldsymbol{w}^{*}\|_{2}$
is used as performance index in our theoretical analysis.

Our FRR algorithm is detailed in Algorithm \ref{alg:FRR}. First we
train an initial estimator with the clean dataset $\mathcal{S}$.
The residual error $r_{i}$ is computed for every data point in the
noisy dataset $\hat{\mathcal{S}}$ and then any $\hat{\boldsymbol{x}}\in\hat{\mathcal{S}}$
with residual error larger than the threshold $\tau$ is filtered
out. The noisy dataset after filtering is denoted as $\hat{\mathcal{S}}'$.
We assign instance weight $\alpha_{i}$ for each $\hat{\boldsymbol{x}}^{(i)}\in\hat{\mathcal{S}}'$.
Finally we retrain the weighted least square to get the robust estimation
$\bar{\boldsymbol{w}}$.

The reader might note that we have not specify how to choose $\tau$
and $\alpha_{i}$ in Algorithm \ref{alg:FRR}. It is possible to adaptively
choose these parameters such that the recovery error bound given in
Theorem \ref{thm:recovery-error-retraining} (Section \ref{sec:Theoretical-Analysis})
is minimized. . When the volume of the noisy set is infinite, we can
simply choose $\tau$ sufficiently large and $\alpha_{i}=1$. Theorem
\ref{thm:recovery-error-retraining} guarantees that the recovery
error is asymptotically zero in this case. However when the volume
of the noisy set is finite, finding the optimal parameters requires
more efforts. In practice we suggest to choose
\begin{equation}
\tau=c_{1}\sum_{i}r_{i}/m,\ \alpha_{i}=c_{2}\exp(-r_{i}/c_{3})\label{eq:practice-choice-of-tau-alpha}
\end{equation}
 where $\{c_{1},c_{2},c_{3}\}$ are tuned by cross-validation. The
reader is free to design any filtering and reweighting scheme if he/she
has more prior knowledge about the noise distribution. However, several
rules must be followed when designing the filtering and reweighting
scheme. Please check the discussion under Theorem \ref{thm:recovery-error-retraining}
to see why those rules are important to the success of FRR.

\paragraph{Remark 1}

Although Algorithm \ref{alg:FRR} only concerns about linear estimator
and least square loss function, the FRR is essentially a meta algorithm
applicable to non-linear kernel machines and general convex loss function,
for example Gaussian process regression. The feature vector $\boldsymbol{x}$
could be replaced by $\psi(\boldsymbol{x})$ where $\psi(\cdot)$
is the feature map function induced by the kernel function and the
inner product $\left\langle \boldsymbol{w},\boldsymbol{x}\right\rangle $
could be replaced by the corresponding inner product defined on the
Reproducing Hilbert Kernel Space (RHKS). However in this generalized
formulation, our proof should be modified respectively. For example
we should replace matrix concentration with Talagrand's concentration.
To keep things intuitive without loss of generality, we will focus
on the linear model in our theoretical analysis.

\section{Theoretical Analysis \label{sec:Theoretical-Analysis}}

In this section, we give the statistical learning guarantees of FRR.
Our main result is given in Theorem \ref{thm:recovery-error-retraining}
which claims the recovery error bound. Two non-trivial rules for effective
filtering and reweighting are derived from Theorem \ref{thm:recovery-error-retraining}.

First we show that the initial estimator trained on clean dataset
is not far away from the ground-truth $\boldsymbol{w}^{*}$. This
is a well-known result in linear regression. The proof is given in
Appendix \ref{sec:Proof-of-Lemma-init-estimator-is-good-enough}.
\begin{lem}
\label{lem:init-estimator-is-good-enough} In Algorithm \ref{alg:FRR},
with probability at least $1-\eta$,
\[
\|\boldsymbol{w}^{\mathrm{init}}-\boldsymbol{w}^{*}\|_{2}\leq c^{2}\sigma\sqrt{d}[\log(2d/\eta)]^{3/2}/\sqrt{|\mathcal{S}|}\triangleq\Delta
\]
 provided that $|\mathcal{S}|\geq\max\{(\xi_{\max}/\sigma)^{2},4d\log^{2}(2d/\eta)/c^{6}\}$.
\end{lem}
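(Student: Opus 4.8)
The plan is to reduce the bound to two standard concentration statements about the clean design. Collect the clean sample into a matrix $X\in\mathbb{R}^{n\times d}$ whose $i$-th row is $\boldsymbol{x}^{(i)\top}$ and a label vector $\boldsymbol{y}$ with $y_{i}=\langle\boldsymbol{w}^{*},\boldsymbol{x}^{(i)}\rangle+\xi_{i}$, where $n=|\mathcal{S}|$. First I would express the least-squares minimizer through the normal equations as $\boldsymbol{w}^{\mathrm{init}}=(X^{\top}X)^{-1}X^{\top}\boldsymbol{y}$ and substitute $\boldsymbol{y}=X\boldsymbol{w}^{*}+\boldsymbol{\xi}$ to obtain the exact identity
\[
\boldsymbol{w}^{\mathrm{init}}-\boldsymbol{w}^{*}=(X^{\top}X)^{-1}X^{\top}\boldsymbol{\xi}\ ,
\]
which is valid as soon as $X^{\top}X$ is invertible. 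Submultiplicativity then gives $\|\boldsymbol{w}^{\mathrm{init}}-\boldsymbol{w}^{*}\|_{2}\le\lambda_{\min}(X^{\top}X)^{-1}\,\|X^{\top}\boldsymbol{\xi}\|_{2}$, so the whole proof splits into a lower bound on $\lambda_{\min}(X^{\top}X)$ and an upper bound on the cross term $\|X^{\top}\boldsymbol{\xi}\|_{2}=\|\sum_{i}\xi_{i}\boldsymbol{x}^{(i)}\|_{2}$.

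For the Gram matrix I would invoke matrix concentration. The summands $\boldsymbol{x}^{(i)}\boldsymbol{x}^{(i)\top}$ are independent, positive semidefinite, have mean $\mathbb{E}\boldsymbol{x}\boldsymbol{x}^{\top}=I$, and obey the almost-sure operator-norm bound $\|\boldsymbol{x}^{(i)}\boldsymbol{x}^{(i)\top}\|_{\mathrm{op}}=\|\boldsymbol{x}^{(i)}\|_{2}^{2}\le c^{2}d\log(2d/\eta)=:R$. A matrix Chernoff bound then yields $\lambda_{\min}(X^{\top}X)\ge n/2$ with probability at least $1-\eta/2$ whenever $n\gtrsim R\log(2d/\eta)$; after absorbing the universal matrix-Chernoff constant, the hypothesis $n\ge 4d\log^{2}(2d/\eta)/c^{6}$ is exactly what places us in this regime, and it simultaneously supplies the invertibility used in the first step. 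Hence $\lambda_{\min}(X^{\top}X)^{-1}\le 2/n$ on this event.

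For the cross term I would set $\boldsymbol{z}_{i}=\xi_{i}\boldsymbol{x}^{(i)}$, which are independent and mean zero because $\mathbb{E}[\xi_{i}\mid\boldsymbol{x}^{(i)}]=0$, and apply a vector/matrix Bernstein inequality. Each vector is bounded, $\|\boldsymbol{z}_{i}\|_{2}\le\xi_{\max}c\sqrt{d\log(2d/\eta)}$, while the total variance is governed by the sub-gaussian scale of the noise through $\sum_{i}\mathbb{E}[\boldsymbol{z}_{i}\boldsymbol{z}_{i}^{\top}]=n\sigma^{2}I$ and $\sum_{i}\mathbb{E}\|\boldsymbol{z}_{i}\|_{2}^{2}\le n\sigma^{2}c^{2}d\log(2d/\eta)$. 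Bernstein's inequality then produces a variance (``sub-gaussian'') term that scales with $\sigma$ and a bounded (``linear'') term that scales with $\xi_{\max}$; the role of the second hypothesis $n\ge(\xi_{\max}/\sigma)^{2}$ is precisely to force the variance term to dominate, so that on an event of probability at least $1-\eta/2$ the rate carries $\sigma$ rather than $\xi_{\max}$. Tracking the polylog, one half-power of $\log(2d/\eta)$ is spent by the almost-sure norm control of $\boldsymbol{x}$ inside the variance proxy and the remaining powers by the Bernstein/dimensional tail, yielding a bound of the form $\|X^{\top}\boldsymbol{\xi}\|_{2}\le C\sigma\sqrt{nd}\,[\log(2d/\eta)]$ (with a factor of $c$ inherited from the norm bound).

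Combining the two events by a union bound and inserting both estimates into $\|\boldsymbol{w}^{\mathrm{init}}-\boldsymbol{w}^{*}\|_{2}\le(2/n)\|X^{\top}\boldsymbol{\xi}\|_{2}$ gives the $1/\sqrt{n}$ decay and reproduces the claimed $\Delta=c^{2}\sigma\sqrt{d}\,[\log(2d/\eta)]^{3/2}/\sqrt{n}$ once the universal constant and the polylog power are identified. I expect the cross-term estimate, not the eigenvalue estimate, to be the main obstacle: one must retain the full bounded/sub-gaussian structure of $\xi$ so that the final rate depends on $\sigma$ and not on $\xi_{\max}$ — this is where $n\ge(\xi_{\max}/\sigma)^{2}$ is indispensable — and one must account carefully for how the $\log(2d/\eta)$ factors are spent (the half-power from the almost-sure control of $\|\boldsymbol{x}\|_{2}$ together with the powers from the Bernstein tail) in order to land on exactly the stated $[\log(2d/\eta)]^{3/2}$, which is where any looseness in the bookkeeping enters.
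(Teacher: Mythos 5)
Your proposal is correct and follows essentially the same route as the paper's proof: the normal-equations identity $\boldsymbol{w}^{\mathrm{init}}-\boldsymbol{w}^{*}=(X^{\top}X)^{-1}X^{\top}\boldsymbol{\xi}$, a matrix-concentration lower bound $\lambda_{\min}(X^{\top}X)\geq n/2$ under $n\gtrsim d\log^{2}(2d/\eta)$, a Bernstein bound on $\|X^{\top}\boldsymbol{\xi}\|_{2}$ in which the condition $n\geq(\xi_{\max}/\sigma)^{2}$ makes the $\sigma$-variance term dominate the $\xi_{\max}$-bounded term, and a union bound. The only cosmetic differences are that you invoke matrix Chernoff where the paper reuses matrix Bernstein for the Gram matrix, and your polylog bookkeeping is slightly tighter (one power of $\log(2d/\eta)$ on the cross term versus the paper's $3/2$), which still implies the stated $\Delta$.
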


Lemma \ref{lem:init-estimator-is-good-enough} shows that the initial
estimator will not be far away from $\boldsymbol{w}^{*}$ as long
as the clean dataset is sufficiently large. The FRR only requires
the volume of the clean dataset $|\mathcal{S}|$ no less than $\mathcal{O}(d\log d)$.
The major challenge is how to retrain on the noisy dataset. Recall
that $\hat{\xi}$ is heavy-tailed so we cannot bound any concentration
directly on $\hat{\mathcal{S}}$. Before we can do any learning on
the noisy dataset, we must bound the variance of $\hat{\xi}$. To
this end, the FRR uses $\boldsymbol{w}^{\mathrm{init}}$ to truncate
$\hat{\xi}$ via instance filtering. After the filtering step on line
3 in Algorithm \ref{alg:FRR}, for any $\hat{\boldsymbol{x}}^{(i)}\in\hat{\mathcal{S}}'$
we have the following lemma. The proof is given in Appendix \ref{sec:Proof-of-Lemma-trunc-noise}.
\begin{lem}
\label{lem:trunc-noise-level} In Algorithm \ref{alg:FRR}, with probability
at least $1-2\eta$, we have
\[
|\hat{\xi_{i}}|\leq r_{i}+c\Delta\sqrt{\log(2/\eta)}\quad\forall\hat{\boldsymbol{x}}^{(i)}\in\hat{\mathcal{S}}'\ .
\]
\end{lem}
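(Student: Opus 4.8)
The plan is to bound the heavy-tailed noise $\hat{\xi}_i$ by the observable residual $r_i$ plus the projection of the estimation error $\boldsymbol{v}:=\boldsymbol{w}^{\mathrm{init}}-\boldsymbol{w}^{*}$ onto the data direction. First I would substitute the noisy-set model $\hat{y}_i=\langle\boldsymbol{w}^{*},\hat{\boldsymbol{x}}^{(i)}\rangle+\hat{\xi}_i$ into the signed residual from line 2 of Algorithm \ref{alg:FRR}, which gives the exact identity
\[
\langle\boldsymbol{w}^{\mathrm{init}},\hat{\boldsymbol{x}}^{(i)}\rangle-\hat{y}_i=\langle\boldsymbol{v},\hat{\boldsymbol{x}}^{(i)}\rangle-\hat{\xi}_i .
\]
Since $r_i$ is the absolute value of the left-hand side, solving for $\hat{\xi}_i$ and applying the triangle inequality yields
\[
|\hat{\xi}_i|\leq r_i+|\langle\boldsymbol{v},\hat{\boldsymbol{x}}^{(i)}\rangle| .
\]
The key feature of this step is that it uses nothing about the distribution of $\hat{\xi}_i$ — essential because its mean need not exist — so the whole burden shifts onto controlling the projection term $|\langle\boldsymbol{v},\hat{\boldsymbol{x}}^{(i)}\rangle|$.

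To bound the projection I would condition on the clean set $\mathcal{S}$. Because $\boldsymbol{w}^{\mathrm{init}}$ is computed solely from $\mathcal{S}$ while $\hat{\boldsymbol{x}}^{(i)}\in\hat{\mathcal{S}}$ is drawn independently, conditioning on $\mathcal{S}$ fixes the direction $\boldsymbol{v}$ and leaves $\hat{\boldsymbol{x}}^{(i)}$ a mean-zero sub-gaussian vector with $\mathbb{E}\hat{\boldsymbol{x}}^{(i)}\hat{\boldsymbol{x}}^{(i)\top}=I$. Hence the scalar $\langle\boldsymbol{v},\hat{\boldsymbol{x}}^{(i)}\rangle$ is one-dimensional sub-gaussian with variance proxy proportional to $\|\boldsymbol{v}\|_2^2$, and the standard sub-gaussian tail bound gives, for each fixed $i$,
\[
|\langle\boldsymbol{v},\hat{\boldsymbol{x}}^{(i)}\rangle|\leq c\|\boldsymbol{v}\|_2\sqrt{\log(2/\eta)}
\]
with probability at least $1-\eta$. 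Invoking Lemma \ref{lem:init-estimator-is-good-enough}, which supplies $\|\boldsymbol{v}\|_2\leq\Delta$ with probability at least $1-\eta$, and substituting this bound produces the claimed $c\Delta\sqrt{\log(2/\eta)}$; a union bound over the two events accounts for the overall $1-2\eta$ confidence. Note that the filtering condition $r_i\leq\tau$ plays no role here — the inequality holds for every index — and becomes relevant only downstream when $\tau$ is used to truncate the noise level.

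The step I expect to require the most care is the conditioning/independence argument together with the uniformity over $\hat{\mathcal{S}}'$ implied by the $\forall$ quantifier. The clean decoupling of $\boldsymbol{w}^{\mathrm{init}}$ from $\hat{\boldsymbol{x}}^{(i)}$ is exactly what makes the projection genuinely sub-gaussian rather than forcing the crude Cauchy--Schwarz estimate $\|\boldsymbol{v}\|_2\|\hat{\boldsymbol{x}}^{(i)}\|_2$, which would inflate the bound by a factor $\sqrt{d}$ through the norm bound on $\hat{\boldsymbol{x}}^{(i)}$; I would therefore state the conditioning explicitly. The remaining delicate point is that the tail bound is proved for a fixed $i$, so upgrading it to hold simultaneously for all surviving points would in principle cost a $\log m$ factor in a union bound. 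I would either present the inequality as a pointwise guarantee valid for each $\hat{\boldsymbol{x}}^{(i)}\in\hat{\mathcal{S}}'$, or absorb the cardinality of $\hat{\mathcal{S}}'$ into the failure probability $\eta$, observing that this alters only constants and the logarithmic dependence and leaves the structure of the subsequent variance bound unchanged.
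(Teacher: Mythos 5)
Your proposal is correct and follows essentially the same route as the paper's proof: substitute the noise model into the residual, apply the triangle inequality to isolate $|\hat{\xi}_i|$, and exploit the independence of $\boldsymbol{w}^{\mathrm{init}}$ (trained only on $\mathcal{S}$) from $\hat{\boldsymbol{x}}^{(i)}$ to get a sub-gaussian tail bound $c\Delta\sqrt{\log(2/\eta)}$ on the projection, combined with Lemma \ref{lem:init-estimator-is-good-enough} via a union bound. If anything, you are more careful than the paper: your observation that the per-index tail bound needs a union bound over $|\hat{\mathcal{S}}'|$ (or must be read as a pointwise guarantee) addresses a point the paper's proof silently glosses over.
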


Lemma \ref{lem:trunc-noise-level} shows that when we take $\tau\geq c\Delta\sqrt{\log(2/\eta)}$,
we can truncate the noise level on $\hat{\mathcal{S}}'$ below $2\tau$.
It is important to note that in the proof of Lemma \ref{lem:trunc-noise-level},
we use the independence of $\boldsymbol{w}^{\mathrm{init}}$ and $\hat{\boldsymbol{x}}^{(i)}\in\hat{\mathcal{S}}'$
. This indicates that one cannot retrain on $\hat{\mathcal{S}}$ twice.
Actually in the second round of retraining, since previous $\boldsymbol{w}$
depends on $\boldsymbol{x}^{(i)}$, the bound $|\boldsymbol{w}{}^{\top}\boldsymbol{x}^{(i)}|$
will degrade to $\mathcal{O}(d/\sqrt{|\mathcal{S}|})$ which requires
$|\mathcal{S}|\geq\mathcal{O}(d^{2})$. However, if the clean dataset
is larger than $\mathcal{O}(d^{2})$, the initial estimator is already
good enough and no retraining is necessary at all. In the experiment,
we find that iteratively retraining indeed hurts the performance.

After filtering, we get a bounded noise distribution on $\hat{\mathcal{S}}'$
making learning possible. Suppose after filtering, we have $|\hat{\mathcal{S}}'|=m$.
The sample weights $\boldsymbol{\alpha}=[\alpha_{1},\alpha_{2},\cdots,\alpha_{m}]$
are computed via $h(r_{i})$. The following main theorem bounds the
recovery error after retraining.
\begin{thm}
\label{thm:recovery-error-retraining} In Algorithm \ref{thm:recovery-error-retraining},
denote $|\mathcal{S}|=n$, $|\hat{\mathcal{S}}'|=m$. $\{\alpha_{1},\alpha_{2},\dots,\alpha_{m}\}$
are weights of elements in $\hat{\mathcal{S}}'$, $\alpha_{\min}\leq\alpha_{i}\leq\alpha_{\max}$.
$\boldsymbol{x}^{(i)},\hat{\boldsymbol{x}}^{i)}$ are sub-gaussian
random vectors with bounded norm $\|\boldsymbol{x}^{(i)}\|_{2},\|\hat{\boldsymbol{x}}^{(i)}\|_{2}\leq c\sqrt{d\log(2d/\eta)}$
where $c>0$ is a universal constant, $0<\eta<1$. Define
\begin{align*}
\bar{\sigma}_{1}^{2} & \triangleq\|\mathbb{E}\hat{\xi}_{i}^{2}\alpha_{i}^{2}\hat{\boldsymbol{x}}^{(i)}\hat{\boldsymbol{x}}^{(i)}{}^{\top}\|_{2}\\
\bar{\alpha} & \triangleq\mathbb{E}\{\|\alpha_{i}\hat{\boldsymbol{x}}^{(i)}\hat{\boldsymbol{x}}^{(i)}{}^{\top}\|_{2}\}\ .
\end{align*}
 Choose $\tau$ such that $\{\bar{\sigma}_{1},\bar{\alpha}\}$ are
bounded and $m\geq1$. Then with probability at least $1-3\eta$,
\begin{multline}
\|\bar{\boldsymbol{w}}-\boldsymbol{w}^{*}\|_{2}\leq[\frac{1}{2}n+m\alpha_{\min}-c^{3}\bar{\alpha}\sqrt{md}\log(2d/\eta)]^{-1}\\
\{c^{2}\sigma\sqrt{nd}[\log(2d/\eta)]^{3/2}+\bar{\sigma}_{1}\sqrt{m\log(2d/\eta)}\}\label{eq:ffr-recovery-bound}
\end{multline}
 provided
\begin{align*}
n\geq & \max\{(\xi_{\max}/\sigma)^{2},4d\log^{2}(2d/\eta)/c^{6}\}\\
m\geq & d\log^{2}(2d/\eta)\max\{[\alpha_{\max}(\tau+c\Delta\sqrt{\log(2/\eta)})/\bar{\sigma}_{1}]^{2}\\
 & c^{6}(\bar{\alpha}/\alpha_{\min})^{2},(\alpha_{\max}/\bar{\alpha})^{2}\}\ .
\end{align*}
\end{thm}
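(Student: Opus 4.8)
The plan is to reduce the weighted least squares estimator to an explicit error formula through its normal equation, and then bound the resulting numerator (a weighted noise sum) and denominator (the smallest eigenvalue of the weighted design matrix) separately by concentration inequalities. First I would write the first-order optimality condition for the retraining step (line 5 of Algorithm \ref{alg:FRR}). Setting $A \triangleq \sum_{\mathcal{S}}\boldsymbol{x}^{(i)}\boldsymbol{x}^{(i)\top} + \sum_{\hat{\mathcal{S}}'}\alpha_i\hat{\boldsymbol{x}}^{(i)}\hat{\boldsymbol{x}}^{(i)\top}$ and substituting $y_i = \langle\boldsymbol{w}^*,\boldsymbol{x}^{(i)}\rangle+\xi_i$ on $\mathcal{S}$ (and the analogous form with $\hat\xi_i$ on $\hat{\mathcal{S}}'$), the quadratic cross terms cancel and we obtain the exact identity
\[
A(\bar{\boldsymbol{w}}-\boldsymbol{w}^*) = \sum_{\mathcal{S}}\xi_i\boldsymbol{x}^{(i)} + \sum_{\hat{\mathcal{S}}'}\alpha_i\hat\xi_i\hat{\boldsymbol{x}}^{(i)} \triangleq \boldsymbol{b}.
\]
Hence $\|\bar{\boldsymbol{w}}-\boldsymbol{w}^*\|_2 \leq \|\boldsymbol{b}\|_2/\lambda_{\min}(A)$, and the whole theorem reduces to lower-bounding $\lambda_{\min}(A)$ by the denominator of \eqref{eq:ffr-recovery-bound} and upper-bounding $\|\boldsymbol{b}\|_2$ by its numerator.

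For the denominator I would split $A$ into its clean and noisy blocks and recombine via Weyl's inequality. The clean block $\sum_{\mathcal{S}}\boldsymbol{x}^{(i)}\boldsymbol{x}^{(i)\top}$ has expectation $nI$, and a matrix Chernoff bound using $\|\boldsymbol{x}\|_2^2 \leq c^2 d\log(2d/\eta)$ together with the stated lower bound on $n$ gives $\lambda_{\min}\geq \tfrac12 n$. For the noisy block $\sum_{\hat{\mathcal{S}}'}\alpha_i\hat{\boldsymbol{x}}^{(i)}\hat{\boldsymbol{x}}^{(i)\top}$ I would apply matrix Bernstein: each summand has operator norm at most $\alpha_{\max}\|\hat{\boldsymbol{x}}\|_2^2$, its spectral-norm expectation is $\bar\alpha$, and since $\alpha_i\geq\alpha_{\min}$ and $\mathbb{E}\hat{\boldsymbol{x}}\hat{\boldsymbol{x}}^\top=I$ the expectation dominates $m\alpha_{\min}I$ from below. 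This yields $\lambda_{\min}\geq m\alpha_{\min}-c^3\bar\alpha\sqrt{md}\log(2d/\eta)$; the sample-size requirement $m\geq c^6(\bar\alpha/\alpha_{\min})^2 d\log^2(2d/\eta)$ is exactly what forces the deviation to be subdominant so the denominator stays positive.

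For the numerator I would bound the two vector sums in $\boldsymbol{b}$ by concentration. The clean sum $\sum_{\mathcal{S}}\xi_i\boldsymbol{x}^{(i)}$ is a sum of independent zero-mean sub-gaussian vectors whose norm is at most $c^2\sigma\sqrt{nd}[\log(2d/\eta)]^{3/2}$ with high probability, exactly as in the proof of Lemma \ref{lem:init-estimator-is-good-enough}. The noisy sum $\sum_{\hat{\mathcal{S}}'}\alpha_i\hat\xi_i\hat{\boldsymbol{x}}^{(i)}$ is handled by a vector Bernstein bound with variance proxy $\bar\sigma_1^2=\|\mathbb{E}\hat\xi_i^2\alpha_i^2\hat{\boldsymbol{x}}^{(i)}\hat{\boldsymbol{x}}^{(i)\top}\|_2$ and per-term magnitude controlled by the truncated noise level $\tau+c\Delta\sqrt{\log(2/\eta)}$ supplied by Lemma \ref{lem:trunc-noise-level}; the remaining sample-size conditions force the variance term to dominate, giving $\bar\sigma_1\sqrt{m\log(2d/\eta)}$. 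Substituting both bounds into the error formula and taking a union bound over the three concentration events yields the claim with probability at least $1-3\eta$.

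The hard part is two coupled sources of statistical dependence that make the Bernstein steps not immediately legitimate. First, both the filtered set $\hat{\mathcal{S}}'$ and the weights $\alpha_i=h(r_i)$ are functions of the residuals, so $\alpha_i$, $\hat\xi_i$, and $\hat{\boldsymbol{x}}^{(i)}$ are coupled; I would resolve this by conditioning on $\boldsymbol{w}^{\mathrm{init}}$ and using that it is trained only on $\mathcal{S}$ and hence independent of $\hat{\mathcal{S}}$, so that given $\boldsymbol{w}^{\mathrm{init}}$ the noisy summands are i.i.d.\ and the concentration tools apply. The second and subtler point is showing that $\mathbb{E}[\alpha_i\hat\xi_i\hat{\boldsymbol{x}}^{(i)}]=0$, so that the noisy sum is centered and no bias term appears: since $r_i=|\langle\boldsymbol{w}^{\mathrm{init}}-\boldsymbol{w}^*,\hat{\boldsymbol{x}}^{(i)}\rangle-\hat\xi_i|$, the integrand is odd in $\hat\xi_i$ only when the reweighting respects the symmetry of $\hat\xi$. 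This is precisely the place where the symmetry assumption on $\hat\xi$ and the two reweighting rules discussed after the theorem are needed, and verifying that they eliminate (or render negligible) this bias is the main obstacle of the argument.
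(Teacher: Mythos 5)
Your proposal follows essentially the same route as the paper's own proof: solve the normal equation to get $\bar{\boldsymbol{w}}-\boldsymbol{w}^{*}=(XX^{\top}+\hat{X}\mathcal{D}(\boldsymbol{\alpha})\hat{X}^{\top})^{-1}(X\boldsymbol{\xi}+\hat{X}\mathcal{D}(\boldsymbol{\alpha})\hat{\boldsymbol{\xi}})$, lower-bound the smallest eigenvalue of the clean and weighted-noisy Gram blocks separately (the latter via matrix Bernstein with expectation at least $m\alpha_{\min}I$ and deviation $c^{3}\bar{\alpha}\sqrt{md}\log(2d/\eta)$), upper-bound the two noise sums with the same variance proxies $\sigma$ and $\bar{\sigma}_{1}$ and the truncation level from Lemma \ref{lem:trunc-noise-level}, and union-bound over three events. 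The only difference is that you explicitly flag the centering of $\sum_{i}\alpha_{i}\hat{\xi}_{i}\hat{\boldsymbol{x}}^{(i)}$ after filtering as a point needing verification, whereas the paper disposes of it by simply invoking the symmetry assumption on $\hat{\xi}$; your caution is warranted but does not change the argument.
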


\begin{proof}
Denote $X=[\boldsymbol{x}^{(1)},\boldsymbol{x}^{(2)},\cdots,\boldsymbol{x}^{(n)}]$,
$\hat{X}=[\hat{\boldsymbol{x}}^{(1)},\hat{\boldsymbol{x}}^{(2)},\cdots,\hat{\boldsymbol{x}}^{(n)}]$,
$\boldsymbol{y}=[y_{1},y_{2},\cdots,y_{n}]$, $\hat{\boldsymbol{y}}=[\hat{y}_{1},\hat{y}_{2},\cdots,\hat{y}_{m}]$,
$\boldsymbol{\alpha}=[\alpha_{1},\alpha_{2},\cdots,\alpha_{m}]$.
$\mathcal{D}(\cdot)$ is the diagonal function.

According to Algorithm \ref{alg:FRR},
\begin{align*}
\bar{\boldsymbol{w}}= & \underset{\boldsymbol{w}}{\mathrm{argmin}}\ \mathcal{L}(\boldsymbol{w})\\
\triangleq & \|X{}^{\top}\boldsymbol{w}-y\|^{2}+\mathrm{tr}\{(\hat{X}{}^{\top}\boldsymbol{w}-\hat{\boldsymbol{y}}){}^{\top}\mathcal{D}(\boldsymbol{\alpha})(\hat{X}{}^{\top}\boldsymbol{w}-\hat{\boldsymbol{y}})\}\ .
\end{align*}
 The derivation is
\begin{align*}
\nabla_{\boldsymbol{w}}\mathcal{L}(\boldsymbol{w}) & =XX^{\top}\boldsymbol{w}-X\boldsymbol{y}+\hat{X}\mathcal{D}(\boldsymbol{\alpha})\hat{X}{}^{\top}\boldsymbol{w}-\hat{X}\mathcal{D}(\boldsymbol{\alpha})\hat{\boldsymbol{y}}\\
 & =(XX^{\top}+\hat{X}\mathcal{D}(\boldsymbol{\alpha})\hat{X}{}^{\top})\boldsymbol{w}-(X\boldsymbol{y}+\hat{X}\mathcal{D}(\boldsymbol{\alpha})\hat{\boldsymbol{y}})\ .
\end{align*}
 Similar to Lemma \ref{lem:init-estimator-is-good-enough},
\begin{align*}
\bar{\boldsymbol{w}}= & (XX^{\top}+\hat{X}\mathcal{D}(\boldsymbol{\alpha})\hat{X}{}^{\top})^{-1}(X\boldsymbol{y}+\hat{X}\mathcal{D}(\boldsymbol{\alpha})\hat{\boldsymbol{y}})\\
= & (XX^{\top}+\hat{X}\mathcal{D}(\boldsymbol{\alpha})\hat{X}{}^{\top})^{-1}\\
 & (XX^{\top}\boldsymbol{w}^{*}+X\boldsymbol{\xi}+\hat{X}\mathcal{D}(\boldsymbol{\alpha})X^{\top}\boldsymbol{w}^{*}+\hat{X}\mathcal{D}(\boldsymbol{\alpha})\boldsymbol{\xi})\\
= & (XX^{\top}+\hat{X}\mathcal{D}(\boldsymbol{\alpha})\hat{X}{}^{\top})^{-1}\\
 & [(XX^{\top}+\hat{X}\mathcal{D}(\boldsymbol{\alpha})X^{\top})\boldsymbol{w}^{*}+X\boldsymbol{\xi}+\hat{X}\mathcal{D}(\boldsymbol{\alpha})\hat{\boldsymbol{\xi}}]\\
= & \boldsymbol{w}^{*}+(XX^{\top}+\hat{X}\mathcal{D}(\boldsymbol{\alpha})\hat{X}{}^{\top})^{-1}(X\boldsymbol{\xi}+\hat{X}\mathcal{D}(\boldsymbol{\alpha})\hat{\boldsymbol{\xi}})\ .
\end{align*}

To ensure the matrix inversion exists, we need to ensure that $XX^{\top}$
is invertible as $\hat{X}\mathcal{D}(\boldsymbol{\alpha})\hat{X}{}^{\top}$
is a symmetric positive semi-definite matrix. According to Lemma \ref{lem:init-estimator-is-good-enough},
when
\[
n\geq4d\log^{2}(2d/\eta)/c^{6}\ ,
\]
 we have with probability at least $1-\eta$,
\[
\lambda_{\mathrm{min}}\{\frac{1}{n}XX^{\top}\}\geq1/2
\]
 where $\lambda_{\mathrm{min}}\{\cdot\}$ is the smallest eigenvalue.

To bound $\lambda_{\mathrm{min}}\{\hat{X}\mathcal{D}(\boldsymbol{\alpha})\hat{X}{}^{\top}\}$,
\[
\hat{X}\mathcal{D}(\boldsymbol{\alpha})\hat{X}{}^{\top}=\sum_{i=1}^{m}\alpha_{i}\hat{\boldsymbol{x}}^{(i)}\hat{\boldsymbol{x}}^{(i)}{}^{\top}\ ,
\]
\begin{align*}
\mathbb{E}\{\hat{X}\mathcal{D}(\boldsymbol{\alpha})\hat{X}{}^{\top}\}= & \sum_{i=1}^{m}\mathbb{E}\{\alpha_{i}\hat{\boldsymbol{x}}^{(i)}\hat{\boldsymbol{x}}^{(i)}{}^{\top}\}\ .
\end{align*}
 Please note that $\alpha_{i}$ and $\hat{\boldsymbol{x}}^{(i)}$
are not independent so we cannot simply write $\mathbb{E}\{\alpha_{i}\hat{\boldsymbol{x}}^{(i)}\hat{\boldsymbol{x}}^{(i)}{}^{\top}\}=\alpha_{i}I$.
As $\alpha_{i}\hat{\boldsymbol{x}}^{(i)}\hat{\boldsymbol{x}}^{(i)}{}^{\top}$
are symmetric PSD matrices,
\begin{align*}
\lambda_{\mathrm{min}}\{\mathbb{E}[\hat{X}\mathcal{D}(\boldsymbol{\alpha})\hat{X}{}^{\top}]\}\geq & m\alpha_{\min}\ .
\end{align*}
 Next we bound the concentration of $\hat{X}\mathcal{D}(\boldsymbol{\alpha})\hat{X}{}^{\top}$.
Applying matrix Bernstein's inequality, with probability at least
$1-\eta$,
\begin{align*}
 & \|\hat{X}\mathcal{D}(\boldsymbol{\alpha})\hat{X}{}^{\top}-\mathbb{E}\{\hat{X}\mathcal{D}(\boldsymbol{\alpha})\hat{X}{}^{\top}\}\|_{2}\\
\leq & c\max\{c^{2}\alpha_{\max}d\log^{2}(2d/\eta),c^{2}\bar{\alpha}\sqrt{md}\log(2d/\eta)\}.
\end{align*}
 Suppose
\begin{align*}
 & c^{2}\bar{\alpha}\sqrt{md}\log(2d/\eta)\geq c^{2}\alpha_{\max}d\log^{2}(2d/\eta)\\
\Leftarrow & m\geq(\alpha_{\max}/\bar{\alpha})^{2}d\log^{2}(2d/\eta)
\end{align*}
 Then
\begin{align*}
 & \|\hat{X}\mathcal{D}(\boldsymbol{\alpha})\hat{X}{}^{\top}-\mathbb{E}\{\hat{X}\mathcal{D}(\boldsymbol{\alpha})\hat{X}{}^{\top}\}\|_{2}\\
\leq & c^{3}\bar{\alpha}\sqrt{md}\log(2d/\eta)\ .
\end{align*}
Therefore with probability at least $1-\eta$ :
\begin{align*}
\lambda_{\min}\{\hat{X}\mathcal{D}(\boldsymbol{\alpha})\hat{X}{}^{\top}\}\geq & m\alpha_{\min}-c^{3}\bar{\alpha}\sqrt{md}\log(2d/\eta)
\end{align*}
 provided $m\geq(\alpha_{\max}/\bar{\alpha})^{2}d\log^{2}(2d/\eta)$
. To ensure the lower bound is meaningful, we must constrain
\begin{align*}
 & m\alpha_{\min}-c^{3}\bar{\alpha}\sqrt{md}\log(2d/\eta)\geq0\\
\Leftarrow & m\geq c^{6}(\bar{\alpha}/\alpha_{\min})^{2}d\log^{2}(2d/\eta)\ .
\end{align*}

Combining all above together, we have with probability at least $1-\eta$,
\[
\lambda_{\min}\{XX^{\top}+\hat{X}\mathcal{D}(\boldsymbol{\alpha})\hat{X}{}^{\top}\}\geq\frac{1}{2}n+m\alpha_{\min}-c^{3}\bar{\alpha}\sqrt{md}\log(2d/\eta)
\]
 provided
\[
m\geq\max\{c^{6}(\bar{\alpha}/\alpha_{\min})^{2}d\log^{2}(2d/\eta),(\alpha_{\max}/\bar{\alpha})^{2}d\log(2d/\eta)\}\ .
\]

To bound $\|X\boldsymbol{\xi}\|_{2}$ , according to Lemma \ref{lem:init-estimator-is-good-enough},
with probability at least $1-\eta$,
\[
\|X\boldsymbol{\xi}\|_{2}\leq c^{2}\sigma\sqrt{nd}[\log(2d/\eta)]^{3/2}
\]
 provided $n\geq(\xi_{\max}/\sigma)^{2}$.

To bound $\|\hat{X}\mathcal{D}(\boldsymbol{\alpha})\hat{\boldsymbol{\xi}}\|_{2}$,
\begin{align*}
\hat{X}\mathcal{D}(\boldsymbol{\alpha})\hat{\boldsymbol{\xi}}= & \sum_{i=1}^{m}\hat{\xi}_{i}\alpha_{i}\hat{\boldsymbol{x}}^{(i)}\ .
\end{align*}
According to the assumption, the noise distribution is assumed to
be symmetric,
\begin{align*}
\mathbb{E}\hat{X}\mathcal{D}(\boldsymbol{\alpha})\hat{\boldsymbol{\xi}}= & \boldsymbol{0}\ .
\end{align*}
 In order to apply matrix Bernstein's inequality again,
\begin{align*}
\|\hat{\xi}_{i}\alpha_{i}\hat{\boldsymbol{x}}^{(i)}\|_{2}\leq & [r_{i}+c\Delta\sqrt{\log(2/\eta)}]\alpha_{\max}\sqrt{d\log(2d/\eta)}\\
\leq & \alpha_{\max}[\tau+c\Delta\sqrt{\log(2/\eta)}]\sqrt{d\log(2d/\eta)}\ .
\end{align*}
  Therefore with probability at least $1-\eta$,
\begin{align*}
\|\hat{X}\mathcal{D}(\boldsymbol{\alpha})\hat{\boldsymbol{\xi}}\|_{2}\leq & c\max\{\alpha_{\max}[\tau+c\Delta\sqrt{\log(2/\eta)}]\sqrt{d}[\log(2d/\eta)]^{3/2}\\
 & ,\bar{\sigma}_{1}\sqrt{m\log(2d/\eta)}\}\ .
\end{align*}
 Suppose
\begin{align*}
 & \bar{\sigma}_{1}\sqrt{m\log(2d/\eta)}\geq\alpha_{\max}[\tau+c\Delta\sqrt{\log(2/\eta)}]\sqrt{d}[\log(2d/\eta)]^{3/2}\\
\Leftarrow & m\geq\{\alpha_{\max}[\tau+c\Delta\sqrt{\log(2/\eta)}]/\bar{\sigma}_{1}\}^{2}d\log^{2}(2d/\eta)\ .
\end{align*}
 We then get
\[
\|\hat{X}\mathcal{D}(\boldsymbol{\alpha})\hat{\boldsymbol{\xi}}\|_{2}\leq\bar{\sigma}_{1}\sqrt{m\log(2d/\eta)}\ .
\]

Combining all together, with probability at least $1-3\eta$,
\begin{align*}
\|\bar{\boldsymbol{w}}-\boldsymbol{w}^{*}\|_{2}\leq & \|(XX^{\top}+\hat{X}\mathcal{D}(\boldsymbol{\alpha})\hat{X}{}^{\top})^{-1}(X\boldsymbol{\xi}+\hat{X}\mathcal{D}(\boldsymbol{\alpha})\hat{\boldsymbol{\xi}})\|_{2}\\
\leq & \lambda_{\min}^{-1}\{XX^{\top}+\hat{X}\mathcal{D}(\boldsymbol{\alpha})\hat{X}{}^{\top}\}[\|X\boldsymbol{\xi}\|_{2}+\|\hat{X}\mathcal{D}(\boldsymbol{\alpha})\hat{\boldsymbol{\xi}}\|_{2}]\\
\leq & [\frac{1}{2}n+m\alpha_{\min}-c^{3}\bar{\alpha}\sqrt{md}\log(2d/\eta)]^{-1}\\
 & \{c^{2}\sigma\sqrt{nd}[\log(2d/\eta)]^{3/2}+\bar{\sigma}_{1}\sqrt{m\log(2d/\eta)}\}
\end{align*}
 provide
\begin{align*}
n\geq & \max\{(\xi_{\max}/\sigma)^{2},4d\log^{2}(2d/\eta)/c^{6}\}\\
m\geq & d\log^{2}(2d/\eta)\max\{[\alpha_{\max}(\tau+c\Delta\sqrt{\log(2/\eta)})/\bar{\sigma}_{1}]^{2}\\
 & c^{6}(\bar{\alpha}/\alpha_{\min})^{2},(\alpha_{\max}/\bar{\alpha})^{2}\}\ .
\end{align*}
\end{proof}
Theorem \ref{thm:recovery-error-retraining} is our main result. It
delivers several important messages. Roughly speaking, the recovery
error of FRR is bounded by $\mathcal{O}[(\sigma\sqrt{n}+\bar{\sigma}_{1}\sqrt{m})/(n+m\alpha_{\min})]$.
The choice of $\alpha_{i}=h(r_{i})$ is critical in our analysis.
If $\alpha_{i}$ depends on $r_{i}$, we cannot simply take $\mathbb{E}\alpha_{i}\hat{\boldsymbol{x}}^{(i)}\hat{\boldsymbol{x}}^{(i)}=\alpha_{i}I$
in the proof therefore there is no closed-form tight upper bound as
in Theorem \ref{thm:recovery-error-retraining}. The constants $\{\bar{\sigma}_{1}^{2},\bar{\alpha}\}$
are defined to capture such dependency. In the simplest scenario $\alpha_{i}=\alpha$
where $\alpha$ is a fixed constant independent from $\hat{\mathcal{S}}$,
the recovery error of FRR is then bounded by $\mathcal{O}[\sqrt{\sigma^{2}n+\alpha^{2}\hat{\sigma}^{2}m}/(n+\alpha m)]$
where $\hat{\sigma}^{2}$ is the variance of truncated noise in $\hat{\mathcal{S}}$.
Therefore for a fixed reweighting function $h(r_{i})=\alpha$, the
recovery error will always decrease when $m$ gets larger and $\hat{\sigma}$
controlled by $\tau$ increases at a lower rate. If we have prior
knowledge about the distribution of noise term $\hat{\xi}$, we could
choose $\tau$ to minimize the above error bound.

An obvious problem of the constant reweighting scheme is that it weights
instances of large noise and small noise equally. Observing that the
noise variance is multiplied by $\alpha_{i}^{2}$, it seems reasonable
to assign small weights for instances with large residual error $r_{i}$,
leading to a better (but actually incorrect) recovery error bound
$\mathcal{O}(\sqrt{\sigma^{2}n+\sum_{i}\alpha_{i}^{2}r_{i}^{2}}/(n+\sum_{i}\alpha_{i}))$.
We argue that the analysis of adaptive reweighting scheme is more
complex than the constant case. When $\alpha_{i}$ depends on $\hat{\mathcal{S}}$,
one cannot simply apply the expectation equality and concentration
inequality in the same way. A counter example is to select one instance
with $r_{i}\approx0$ and then set $\alpha_{i}\rightarrow\infty$,
$\alpha_{j}=0$ for $j\not=i$. This is equal to retrain the model
using the $i$ -th instance along. Clearly the recovery error bound
is not bounded by $r_{i}\approx0$. This counter example shows that
there are some rules we must follow when designing adaptive reweighting
scheme. Based on the proof of Theorem \ref{thm:recovery-error-retraining},
we summarize the following two rules:
\begin{itemize}
\item The reweighting function $h(\cdot)$ should depend on $r_{i}$ only.
That is, $\alpha_{i}=h(r_{i})$ must not refer to any other $\{r_{j}|j\not=i\}$.
This rule ensures that $\{\alpha_{i}\}_{i=1}^{m}$ are independent
to each other so that $\sum_{i}\alpha_{i}\hat{\boldsymbol{x}}^{(i)}\hat{\boldsymbol{x}}^{(i)}$
is concentrated around its mean value. Particularly, this rule prohibits
jointly and iteratively optimizing $\{\alpha_{i}\}_{i=1}^{m}$.
\item The maximum of $h(\cdot)$ should be bounded. This rule prevents the
reweighting step assigns large weight for one single instance.
\end{itemize}
Please be advised that the above two rules are derived from the concentration
inequalities we used in our proof. They are imposed by the nature
of our learning problem rather than the weakness of FRR. Informally
speaking, it is discouraged to tune $\alpha_{i}$ too aggressively.
In practice, we suggest to choose a base weight $c_{2}$ for all instances
in $\hat{\mathcal{S}}'$ and then decay the weight a bit proportional
to $r_{i}$. We find that $\alpha_{i}=c_{2}\exp(-r_{i}/c_{3})$ with
$\tau=c_{1}\sum_{i}r_{i}/m$ is a good choice for our problem. It
is easy to verify that this choice satisfies the above two rules.

\section{Experiment}

\begin{table}
\caption{Dataset Statistics}
 \label{tab:dataset_stat}
\begin{centering}
\begin{tabular}{llll}
\hline
Dataset & \# clean data & \# noisy data & noisy ratio\tabularnewline
\hline
TrainDay1 & 17751091 & 4288037 & 19.5\%\tabularnewline
TrainDay2 & 17443691 & 4619126 & 20.9\%\tabularnewline
TrainDay3 & 16198561 & 5851525 & 26.5\%\tabularnewline
TrainDay4 & 17091968 & 5042280 & 22.8\%\tabularnewline
TrainDay5 & 15807494 & 6305344 & 28.5\%\tabularnewline
TestDay1 & 16292229 & 5605534 & 25.6\%\tabularnewline
TestDay2 & 15048558 & 6983814 & 31.7\%\tabularnewline
TestDay3 & 14616549 & 7327788 & 33.4\%\tabularnewline
TestDay4 & 15191055 & 6849075 & 31.1\%\tabularnewline
TestDay5 & 13616039 & 8374171 & 38.1\%\tabularnewline
\hline
\end{tabular}
\par\end{centering}
\end{table}

\begin{figure*}
\begin{minipage}[b][][b]{0.32 \linewidth}
\includegraphics[width=\linewidth]{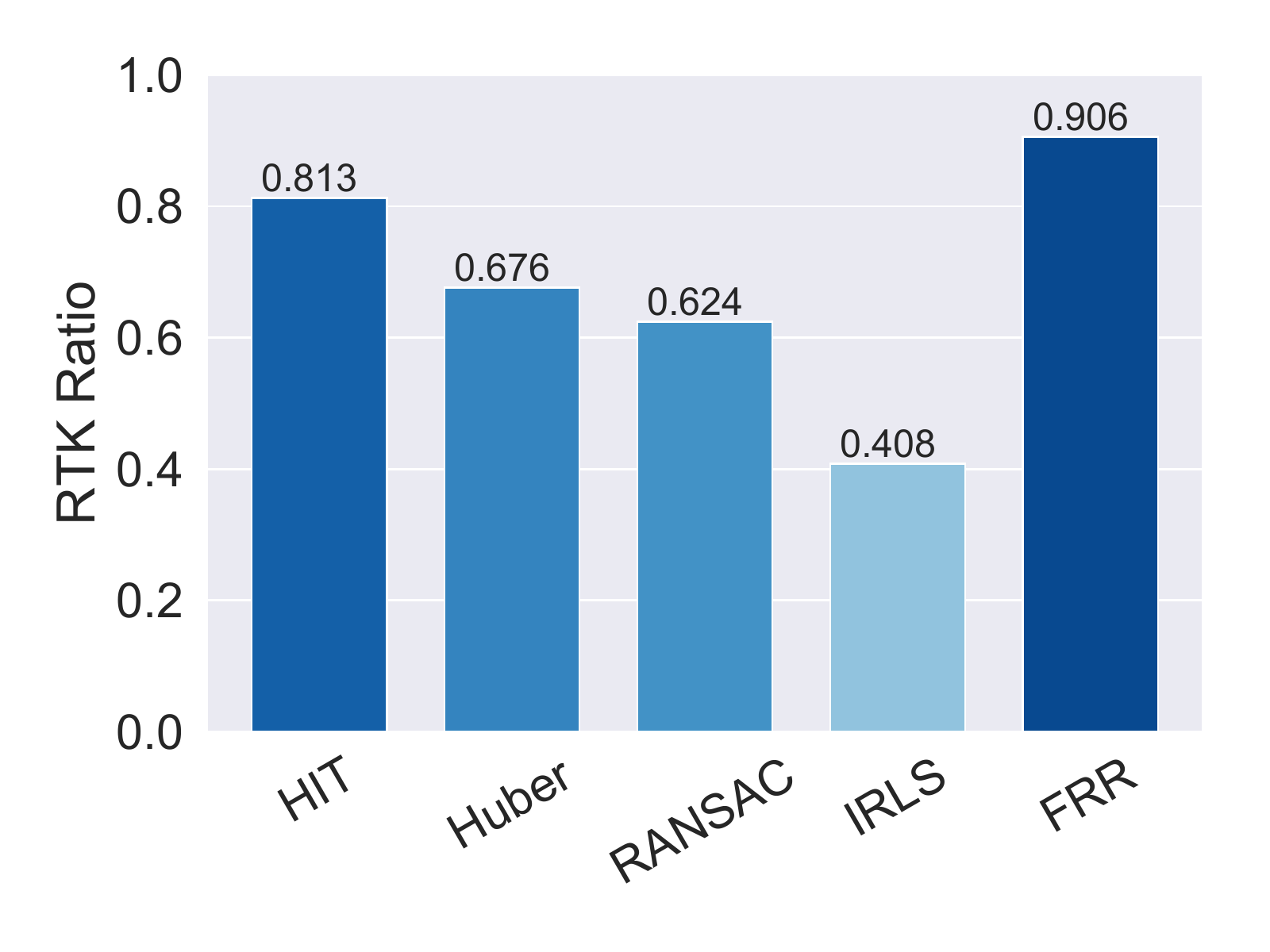}\\
\begin{center}
(a) TestDay1
\end{center}
\end{minipage} \hfil
\begin{minipage}[b][][b]{0.32 \linewidth}
\includegraphics[width=\linewidth]{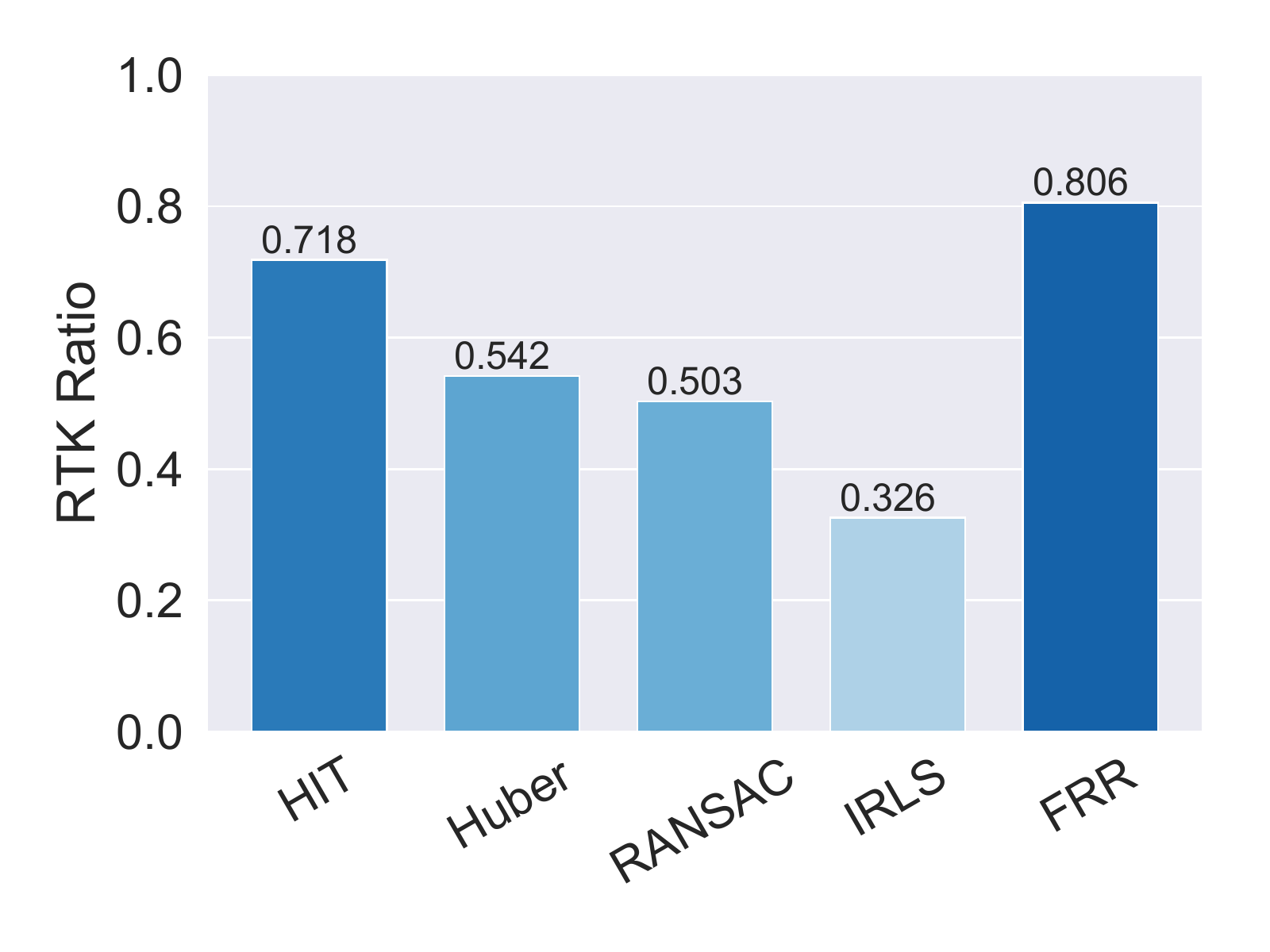}\\
\begin{center}
(b) TestDay2
\end{center}
\end{minipage} \hfil
\begin{minipage}[b][][b]{0.32 \linewidth}
\includegraphics[width=\linewidth]{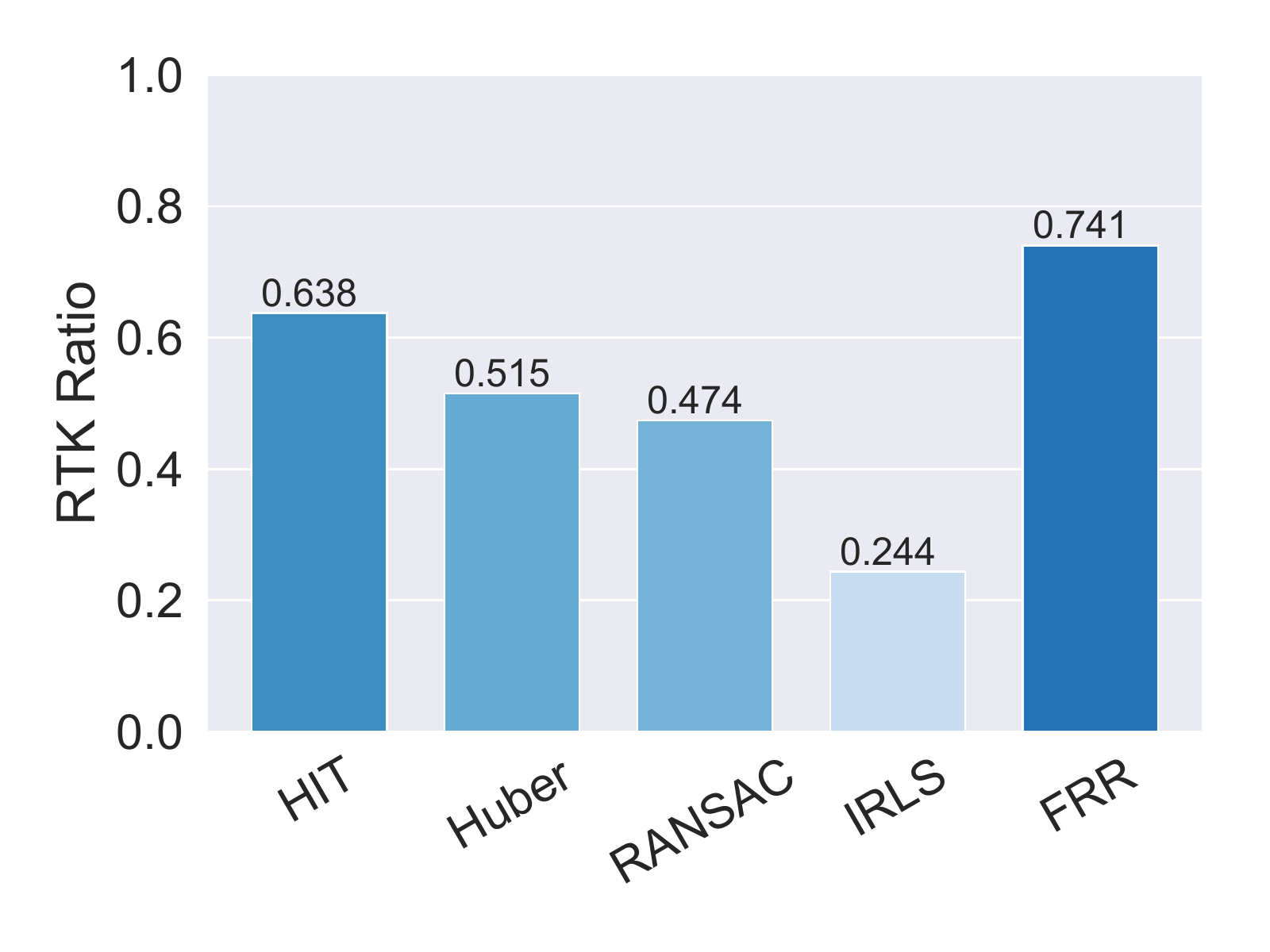}\\
\begin{center}
(c) TestDay3
\end{center}
\end{minipage} \\
\begin{minipage}[b][][b]{0.32 \linewidth}
\includegraphics[width=\linewidth]{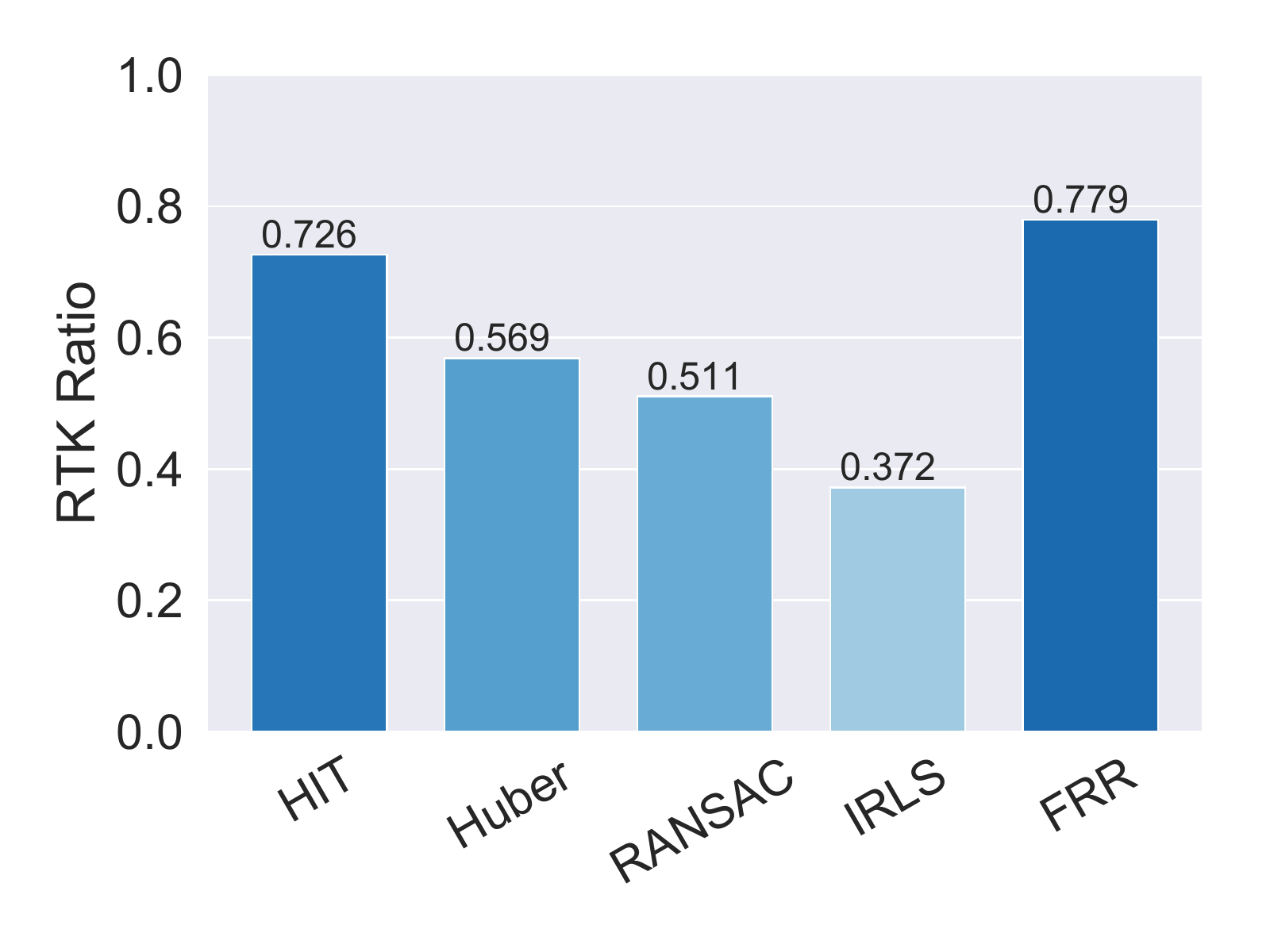}\\
\begin{center}
(d) TestDay4
\end{center}
\end{minipage} \hfil
\begin{minipage}[b][][b]{0.32 \linewidth}
\includegraphics[width=\linewidth]{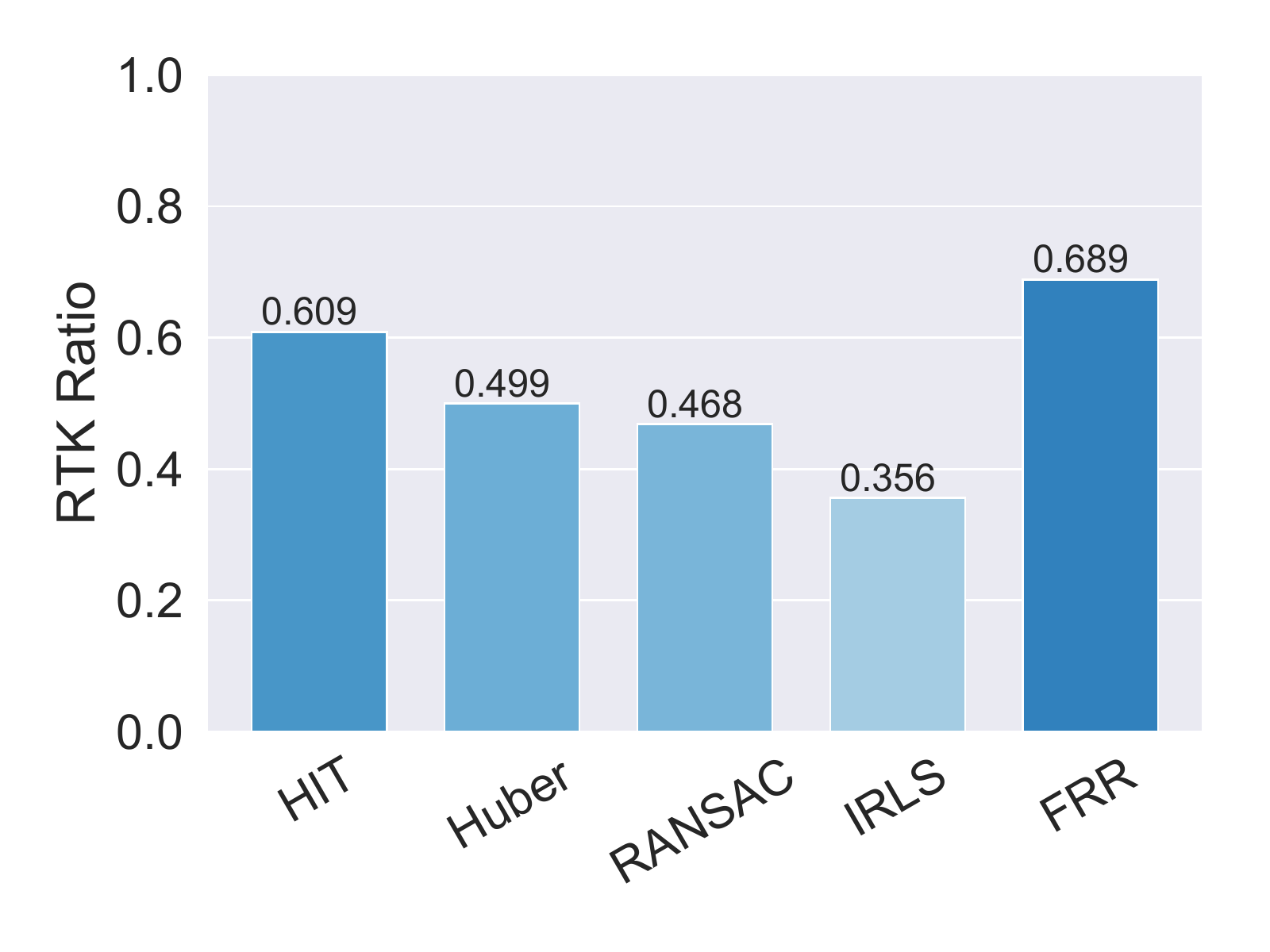}\\
\begin{center}
(e) TestDay5
\end{center}
\end{minipage} \hfil
\begin{minipage}[b][][b]{0.32 \linewidth}
\includegraphics[width=\linewidth]{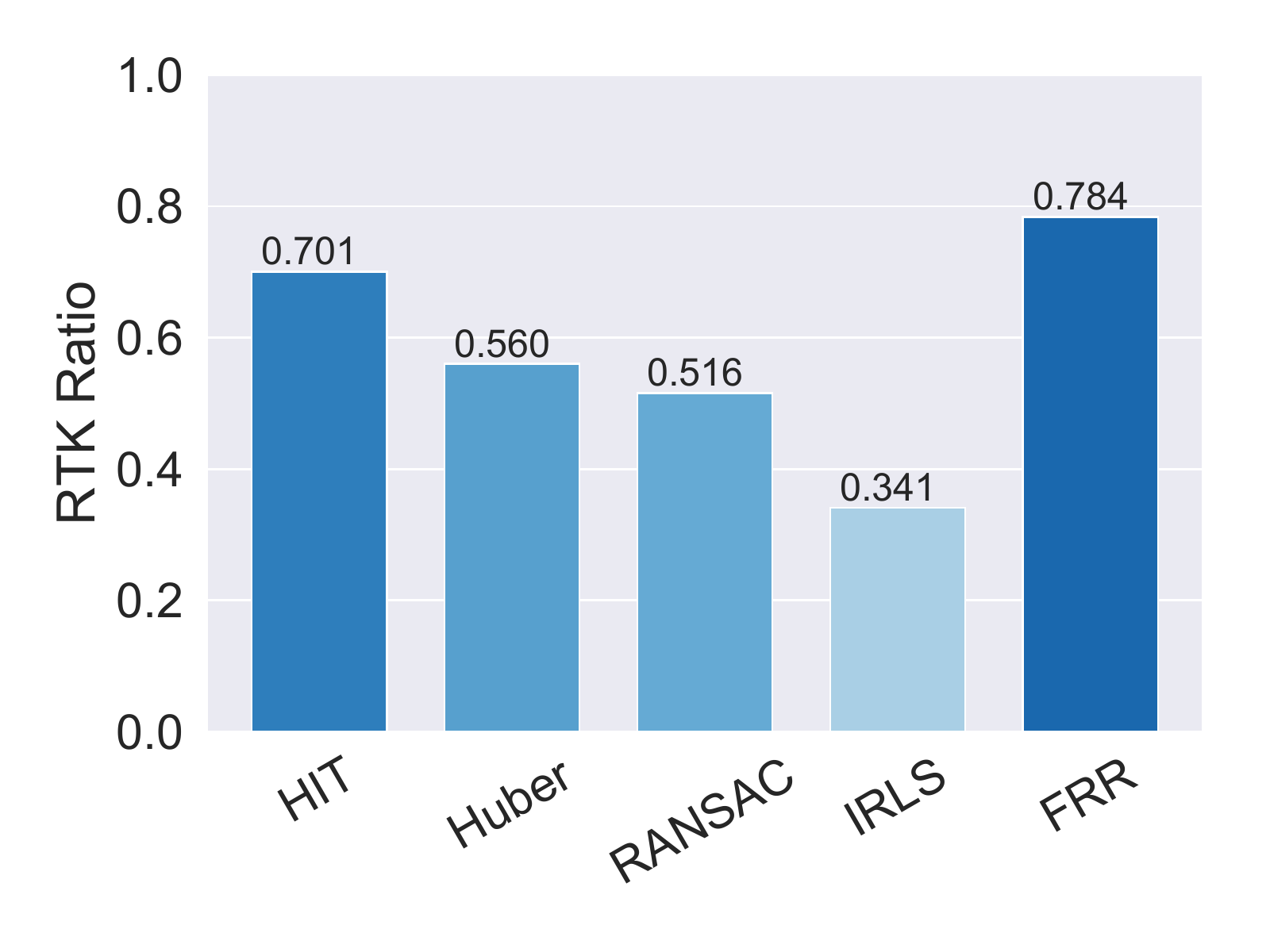}\\
\begin{center}
(f) Average
\end{center}
\end{minipage}\caption{RTK Ratio Comparison}
\label{fig:rtk_ratio}
\end{figure*}

We are using a Network-based Real-time Kinematic (RTK) carrier-phase
ground-based augmentation system to achieve centimeter-level positioning
service. A triangular mesh grid of 20 satellite stations spaced by
75 kilometers are distributed over 100 000 square kilometers area
in an anonymous region. They can receive and process signals of the
satellites from all 4 major GNSS systems. All stations are equipped
with dual-frequency receivers. When satellites' orbit above horizon
with elevation angle larger than 15 degrees, each receiver-satellite
pair forms a Pierce Point on the ionosphere. Their data will be sent
to a central processing facility hosted on the cloud to fit the model
for vTEC pseudorange compensation.

We collected a dataset consisting of 10 consecutive days for experiment.
We use the first 5 days as training set and the last 5 days as testing
set.  Due to data anonymity, we mark the training dates from TrainDay1
to TrainDay5 and the testing dates from TestDay1 to TestDay5. The
dataset statistics are summarized in Table \ref{tab:dataset_stat}.
The second and the third columns are the number of clean and noisy
data points respectively (See Subsection \ref{subsec:How-GPS-positioning}
for our algorithm to detect the noisy data points). The last column
reports the ratio of noisy data points with respect to the total number
of data points collected in one day. As shown in Table \ref{tab:dataset_stat},
around 30\% data points will be filtered out by our noisy data screening
algorithm. This is a great waste if no robust method is used to recycle
the noisy data points. On the other hand, simply using all data points
is harmful as we will show shortly.

It is important to note that the definition of training/testing in
our problem does not follow the convention. We choose one ground station
as the testing station and use its measured signal as ground-truth.
Around this testing station, we choose 16 ground stations as training
stations. The experiment consists of two stages: the offline parameter
tuning stage and the online prediction stage. In the offline parameter
tuning stage, we use the training set to tune parameters of our model
and the best model parameter is selected. In the online prediction
stage, we apply the best model parameter to the testing set. In both
offline and online stages, the performance of the model is evaluated
as following. In every second, we retrieve new data points from our
grid system and train the model on the training stations. Then the
learned model is applied to the testing station to prediction the
double difference for any quadruplet pierce point related to the testing
station. To avoid over-fitting, the quadruplet pierce points containing
the testing station are excluded in the offline stage.

To evaluate performance, we use RTK ratio as index, the higher the
better. The RTK ratio is the probability of successful positioning
while the positioning error is less than one centimeter. The RTK ratio
largely depends on the double differences we predicted for the testing
station.

As our system is a real-time online system, we require that the total
time cost of model training and prediction cannot exceed 300 milliseconds
per second frame. This excludes many time consuming models such as
deep neural network. After benchmarking, we adopt Gaussian Process
(GP) as our base estimator as it achieves the best RTK ratio among
other popular models. In our GP model, the vTEC for each pierce point
is denoted as $\boldsymbol{f}_{i}$ where $i=1,2,\cdots n$. The matrix
$A$ encodes the double-difference quadruplet: each row of $A$ has
exact two ``1'' entries and two ``-1'' entries where the corresponding
ionosphere pierce point is involved in the double-difference calculation.
Each ionosphere pierce point is presented as a four dimensional vector
$\boldsymbol{x}=[x_{1},x_{2},x_{3},x_{4}]\in\mathbb{R}^{4}$. $\{x_{1},x_{2}\}$
encode the latitude and the longitude of the pierce point. $x_{3}$
is the zenith angle and $x_{4}$ is the azimuth angle. Define kernel
function
\[
\kappa(\boldsymbol{x},\boldsymbol{x}')\triangleq\exp\{-(\boldsymbol{x}-\boldsymbol{x}'){}^{\top}\Omega^{-1}(\boldsymbol{x}-\boldsymbol{x}')\}
\]
 where $\Omega=\mathrm{diag}(\omega_{1},\omega_{2},\omega_{3},\omega_{4})$
is a diagonal kernel parameter matrix. The kernel matrix induced by
$\kappa$ is $K\in\mathbb{R}^{n\times n}$ where $K_{i,j}\triangleq\kappa(\boldsymbol{x}^{(i)},\boldsymbol{x}^{(j)})$.
Denote $\boldsymbol{y}$ the double-difference value. Our GP model
aims to minimize the loss function
\begin{equation}
\min_{\boldsymbol{f}}\ \|A\boldsymbol{f}-\boldsymbol{y}\|^{2}+\lambda\boldsymbol{f}{}^{\top}K^{-1}\boldsymbol{f}\ .\label{eq:GP-model}
\end{equation}
 In every second, Eq. (\ref{eq:GP-model}) is solved using the data
points collected on the training stations. To predict on the testing
station, we predict the vTEC for each pierce point $f_{\mathrm{pred}}$
on the testing station by
\[
f_{\mathrm{pred}}=\sum_{i=1}^{n}\kappa(\boldsymbol{x}^{(i)},\boldsymbol{x}^{\mathrm{pred}})\boldsymbol{f}_{i}\ .
\]
 The double-difference value is followed immediately by $A_{\mathrm{pred}}f_{\mathrm{pred}}$
where $A_{\mathrm{pred}}$ is the quadruplet matrix of the testing
station.

In the offline stage, we tune model parameters on the 5 day training
data. The kernel parameter $\omega_{1}$ and $\omega_{2}$ are tuned
in range $[0.001,1000]$ . $\omega_{3}$ and $\omega_{4}$ are tuned
in range $[0.01,\pi/2]$ . $\lambda$ is tuned in range $[10^{-6},10^{6}]$
. To apply our algorithm, we first train Eq. (\ref{eq:GP-model})
on clean dataset. Then we apply the learned model on noisy dataset
to predict $\boldsymbol{y}^{\mathrm{pred}}$ and compute the absolute
residual $\boldsymbol{r}=|\boldsymbol{y}^{\mathrm{pred}}-\boldsymbol{y}|$.
For $c_{1}$ tuned in range $[0.5,8]$, if $\boldsymbol{r}_{i}\leq c_{1}\sum_{i=1}^{n}\boldsymbol{r}_{i}/n$,
the $i$-th noisy data point is reserved in the retraining step otherwise
is discarded. We tune the reweighting parameter $c_{2}$ in range
$[0.01,1]$ and $c_{3}$ in range $[0.001,0.1]$. The model Eq. (\ref{eq:GP-model})
is retrained using the filtered noisy dataset and the clean dataset
together.

For comparison, we considered the following four baseline methods
due to their popularity in robust regression literature: Hard Iterative
Thresholding (HIT) \cite{Bhatia2017}, Huber Regressor (Huber) \cite{Huber1992},
Iteratively Reweighted Least Squares (IRLS) \cite{Holland1977}, Random
Sample Consensus (RANSAC) \cite{Fischler1981}. Due to the computational
time constraint, for algorithms requiring iterative retraining, the
retraining is limited to 5 trials. For HIT, in each iteration we keep
$\rho$ percentage data points. That is, we always select the $\rho$
percentage data points with the smallest residual error for next iteration.
$\rho$ is tuned in range $[0.1,0.9]$. The Huber uses $\ell_{2}$
loss for small residual error and $\ell_{1}$ loss if the residual
error is larger than a threshold. We tune this threshold in range
$[1,5]$. For IRLS, we replace the $\ell_{2}$ loss in Eq. (\ref{eq:GP-model})
with $\ell_{1}$ norm. For RANSAC, in each iteration we keep $\rho$
percentage data points similarly as in HIT.

In offline tuning stage, for all methods we randomly select model
parameters uniformly in the tuning range and evaluate the RTK ratio.
The procedure is repeated 1000 times and the model with the highest
RTK ratio on training set is selected. The parameter tuning is carried
on a cluster with 20,000 CPU cores. Each method takes about 6 wall-clock
hours (13.9 CPU years) to find the best parameters. In Figure \ref{fig:rtk_ratio},
we report the RTK ratio of each method with the selected parameters
on 5 testing days. For each day we report the averaged RTK ratio over
$24\times3600=86,400$ seconds. The last figure reports the averaged
RTK ratio over all 5 days. The 95\% confidence intervals of all methods
are within $10^{-5}$ .

The numerical results show that directly training on the full dataset
could be harmful as the dataset contains lots of outliers --- even
a robust estimator is used. We see that all four baseline methods
will decrease the RTK ratio dramatically. This is not surprising as
most consistent robust methods require the noisy ratio small enough.
In our dataset, the noisy data could take up as many as 30\% of the
total data points, a number too large for most conventional robust
regression methods. The HIT achieves the best results among the four
baseline methods. This is interesting because the best provable breakdown
point of HIT is far below 1\%. Our experiment shows that HIT is able
to tolerate much higher breakdown point in real-world problems. The
fact that Huber does not work well might be due to the $\ell_{1}$
loss used in the Huber regression. Although $\ell_{1}$ loss is more
robust than $\ell_{2}$ loss, it is still affected by outliers. Therefore,
when the proportion of the outliers is as large as 30\%, $\ell_{1}$
loss cannot survive. The RTK ratio of RANSAC is barely above 50\%.
RANSAC requires lots of trials of random sampling in order to work.
However in our online system, we are only able to perform the random
sampling no more than 5 trials which makes RANSAC vulnerable. IRLS
performs poorly with RTK ratio less than half of FRR. This indicates
that IRLS is sensitive to outliers, as we showed in the discussion
of Theorem \ref{sec:Theoretical-Analysis}.

\section{Conclusion}

We propose to estimate the vTEC distribution in real-time by Gaussian
process regression with a three-step Filter-Reweight-Retrain algorithm
to enhance its robustness. We prove that the FRR is consistent and
has breakdown point asymptotically convergent to 1. We apply this
new method in our real-time high precision satellite-based positioning
system to improve the RTK ratio of our baseline Gaussian process model.
By large-scale numerical experiments, we demonstrate the superiority
of the proposed FRR against several state-of-the-art methods. The
FRR is more robust and accurate than conventional methods especially
in our system where conventional methods are fragile due to the high
corruption rate in GPS signal.


%
\section*{Acknowledgement}
  We acknowledge Qianxun Spatial Intelligence Inc. China for providing its high
  precision GPS devices and the grid computing system used in this work.

%
\bibliographystyle{plainnat}
\bibliography{refs}

\begin{thebibliography}{28}
\providecommand{\natexlab}[1]{#1}
\providecommand{\url}[1]{\texttt{#1}}
\expandafter\ifx\csname urlstyle\endcsname\relax
  \providecommand{\doi}[1]{doi: #1}\else
  \providecommand{\doi}{doi: \begingroup \urlstyle{rm}\Url}\fi

\bibitem[Arikan et~al.(2003)Arikan, Erol, and Arikan]{arikan2003regularized}
F~Arikan, CB~Erol, and O~Arikan.
\newblock Regularized estimation of vertical total electron content from global
  positioning system data.
\newblock \emph{Journal of Geophysical Research: Space Physics}, 108\penalty0
  (A12), 2003.

\bibitem[Bhatia et~al.(2015)Bhatia, Jain, and Kar]{Bhatia2015}
Kush Bhatia, Prateek Jain, and Purushottam Kar.
\newblock Robust regression via hard thresholding.
\newblock In \emph{Advances in Neural Information Processing Systems}, pages
  721--729, 2015.

\bibitem[Bhatia et~al.(2017)Bhatia, Jain, Kamalaruban, and Kar]{Bhatia2017}
Kush Bhatia, Prateek Jain, Parameswaran Kamalaruban, and Purushottam Kar.
\newblock Consistent robust regression.
\newblock In \emph{Advances in Neural Information Processing Systems}, pages
  2110--2119. 2017.

\bibitem[Chen et~al.(2013)Chen, Caramanis, and Mannor]{Chen2013}
Yudong Chen, Constantine Caramanis, and Shie Mannor.
\newblock Robust sparse regression under adversarial corruption.
\newblock In \emph{Proceedings of the 30th International Conference on Machine
  Learning}, volume~28, pages 774--782, 2013.

\bibitem[Collins and Langley(1997)]{collins1997tropospheric}
J~Paul Collins and Richard~Brian Langley.
\newblock \emph{A tropospheric delay model for the user of the wide area
  augmentation system}.
\newblock Department of Geodesy and Geomatics Engineering, University of New
  Brunswick Fredericton, 1997.

\bibitem[Dalalyan and Chen(2012)]{Dalalyan2012}
Arnak Dalalyan and Yin Chen.
\newblock Fused sparsity and robust estimation for linear models with unknown
  variance.
\newblock In \emph{Advances in Neural Information Processing Systems}, pages
  1259--1267, 2012.

\bibitem[Diakonikolas et~al.(2018)Diakonikolas, Kong, and
  Stewart]{Diakonikolas2018}
Ilias Diakonikolas, Weihao Kong, and Alistair Stewart.
\newblock Efficient algorithms and lower bounds for robust linear regression.
\newblock \emph{arXiv preprint arXiv:1806.00040}, 2018.

\bibitem[Fischler and Bolles(1981)]{Fischler1981}
Martin~A. Fischler and Robert~C. Bolles.
\newblock Random sample consensus: A paradigm for model fitting with
  applications to image analysis and automated cartography.
\newblock \emph{Commun. ACM}, 24\penalty0 (6):\penalty0 381--395, 1981.

\bibitem[Hofmann-Wellenhof et~al.(2012)Hofmann-Wellenhof, Lichtenegger, and
  Collins]{Hofmann-Wellenhof1993}
Bernhard Hofmann-Wellenhof, Herbert Lichtenegger, and James Collins.
\newblock \emph{Global positioning system: theory and practice}.
\newblock Springer Science \& Business Media, 2012.

\bibitem[Holland and Welsch(1977)]{Holland1977}
Paul~W Holland and Roy~E Welsch.
\newblock Robust regression using iteratively reweighted least-squares.
\newblock \emph{Communications in Statistics-theory and Methods}, 6\penalty0
  (9):\penalty0 813--827, 1977.

\bibitem[Huang and Reinisch(2001)]{huang2001vertical}
Xueqin Huang and Bodo~W Reinisch.
\newblock Vertical electron content from ionograms in real time.
\newblock \emph{Radio Science}, 36\penalty0 (2):\penalty0 335--342, 2001.

\bibitem[Huber(1992)]{Huber1992}
Peter~J. Huber.
\newblock \emph{Robust Estimation of a Location Parameter}, pages 492--518.
\newblock Springer New York, New York, NY, 1992.

\bibitem[Klobuchar(1987)]{Klobuchar1987}
J.~A. Klobuchar.
\newblock Ionospheric time-delay algorithm for single-frequency gps users.
\newblock \emph{IEEE Transactions on Aerospace and Electronic Systems},
  AES-23\penalty0 (3):\penalty0 325--331, 1987.

\bibitem[Mannucci et~al.(1998)Mannucci, Wilson, Yuan, Ho, Lindqwister, and
  Runge]{mannucci1998global}
AJ~Mannucci, BD~Wilson, DN~Yuan, CH~Ho, UJ~Lindqwister, and TF~Runge.
\newblock A global mapping technique for gps-derived ionospheric total electron
  content measurements.
\newblock \emph{Radio science}, 33\penalty0 (3):\penalty0 565--582, 1998.

\bibitem[McWilliams et~al.(2014)McWilliams, Krummenacher, Lucic, and
  Buhmann]{McWilliams2014}
Brian McWilliams, Gabriel Krummenacher, Mario Lucic, and Joachim~M Buhmann.
\newblock Fast and robust least squares estimation in corrupted linear models.
\newblock In \emph{Advances in Neural Information Processing Systems}, pages
  415--423, 2014.

\bibitem[Meza et~al.(2009)Meza, Zele, and Rovira]{Meza2009}
A.~Meza, M.A.~Van Zele, and M.~Rovira.
\newblock Solar flare effect on the geomagnetic field and ionosphere.
\newblock \emph{Journal of Atmospheric and Solar-Terrestrial Physics},
  71\penalty0 (12):\penalty0 1322 -- 1332, 2009.

\bibitem[Morgenthaler(2007)]{Morgenthaler2007}
Stephan Morgenthaler.
\newblock A survey of robust statistics.
\newblock \emph{Statistical Methods and Applications}, 15\penalty0
  (3):\penalty0 271--293, 2007.

\bibitem[Mukesh(2019)]{Mukesh2019}
P.and Karthikeyan V.and Sindhu~P. Mukesh, R.and~Soma.
\newblock Prediction of ionospheric vertical total electron content from gps
  data using ordinary kriging-based surrogate model.
\newblock \emph{Astrophysics and Space Science}, 364\penalty0 (1):\penalty0 15,
  Jan 2019.

\bibitem[Nguyen and Tran(2013)]{Nguyen2013}
N.~H. Nguyen and T.~D. Tran.
\newblock Exact recoverability from dense corrupted observations via$\ell
  _{1}$-minimization.
\newblock \emph{IEEE Transactions on Information Theory}, 59\penalty0
  (4):\penalty0 2017--2035, 2013.

\bibitem[Otsuka et~al.(2002)Otsuka, Ogawa, Saito, Tsugawa, Fukao, and
  Miyazaki]{200263}
Y.~Otsuka, T.~Ogawa, A.~Saito, T.~Tsugawa, S.~Fukao, and S.~Miyazaki.
\newblock A new technique for mapping of total electron content using gps
  network in japan.
\newblock \emph{Earth, Planets and Space}, 54\penalty0 (1):\penalty0 63--70,
  2002.

\bibitem[Pajares et~al.(2005)Pajares, Zornoza, and Subirana]{Pajares2005}
M.H. Pajares, J.M.J. Zornoza, and J.S. Subirana.
\newblock \emph{GPS Data Processing: Code and Phase : Algorithms, Techniques
  and Recipes}.
\newblock Centre de Publicacions del Campus Nord, UPC, 2005.

\bibitem[Renga et~al.(2018)Renga, Causa, Tancredi, and Grassi]{Renga2018}
Alfredo Renga, Flavia Causa, Urbano Tancredi, and Michele Grassi.
\newblock Accurate ionospheric delay model for real-time gps-based positioning
  of leo satellites using horizontal vtec gradient estimation.
\newblock \emph{GPS Solutions}, 22\penalty0 (2):\penalty0 46, Feb 2018.

\bibitem[Sardon et~al.(1994)Sardon, Rius, and Zarraoa]{sardon1994estimation}
Esther Sardon, A~Rius, and N~Zarraoa.
\newblock Estimation of the transmitter and receiver differential biases and
  the ionospheric total electron content from global positioning system
  observations.
\newblock \emph{Radio science}, 29\penalty0 (3):\penalty0 577--586, 1994.

\bibitem[Teunissen(1995)]{Teunissen1995}
P.~J.~G. Teunissen.
\newblock The least-squares ambiguity decorrelation adjustment: a method for
  fast gps integer ambiguity estimation.
\newblock \emph{Journal of Geodesy}, 70\penalty0 (1):\penalty0 65--82, 1995.

\bibitem[Teunissen(2003)]{teunissen2003towards}
Peter Teunissen.
\newblock Towards a unified theory of gnss ambiguity resolution.
\newblock \emph{Journal of Global Positioning Systems}, 2\penalty0
  (1):\penalty0 1--12, 2003.

\bibitem[Teunissen(1998)]{Teunissen1998}
Peter J.~G. Teunissen.
\newblock \emph{GPS Carrier Phase Ambiguity Fixing Concepts}, pages 319--388.
\newblock Springer Berlin Heidelberg, Berlin, Heidelberg, 1998.

\bibitem[Wright and Ma(2010)]{Wright2010}
J.~Wright and Y.~Ma.
\newblock Dense error correction via$\ell^1$-minimization.
\newblock \emph{IEEE Transactions on Information Theory}, 56\penalty0
  (7):\penalty0 3540--3560, 2010.

\bibitem[Zhang et~al.(2018)Zhang, Teunissen, Yuan, Zhang, and Li]{Zhang2018}
Baocheng Zhang, Peter J.~G. Teunissen, Yunbin Yuan, Hongxing Zhang, and Min Li.
\newblock Joint estimation of vertical total electron content (vtec) and
  satellite differential code biases (sdcbs) using low-cost receivers.
\newblock \emph{Journal of Geodesy}, 92\penalty0 (4), Apr 2018.

\end{thebibliography}

%
\appendix

\section{Proof of Lemma \ref{lem:init-estimator-is-good-enough}\label{sec:Proof-of-Lemma-init-estimator-is-good-enough}}
\begin{proof}
Denote $n=|\mathcal{S}|$ and $X\in\mathbb{R}^{d\times n}$ where
the $i$-th column of $X$ is $\boldsymbol{x}^{(i)}\in\mathcal{S}$.
Similarly denote $\boldsymbol{y}\in\mathbb{R}^{d}$ to be the label
vector. The least square regression has solution
\begin{align*}
\boldsymbol{w}^{\mathrm{init}} & =(XX^{\top})^{-1}(X\boldsymbol{y})\\
 & =(\frac{1}{n}XX^{\top})^{-1}(\frac{1}{n}XX{}^{\top}\boldsymbol{w}^{*}-\frac{1}{n}X\boldsymbol{\xi})\ .
\end{align*}

To bound $\|(\frac{1}{n}XX^{\top})^{-1}\|_{2}$, since $\mathbb{E}\{\frac{1}{n}XX^{\top}\}=I$,
according to matrix Bernstein's inequality, with probability at least
$1-\eta$,
\begin{align*}
\|XX^{\top}-nI\|_{2}\leq & c\max\{c^{2}d\log^{2}(2d/\eta),c^{2}\sqrt{nd}\log(2d/\eta)\}\ .
\end{align*}
 Suppose
\begin{align*}
 & \sqrt{nd}\log(2d/\eta)\geq d\log^{2}(2d/\eta)\\
\Leftarrow & n\geq d\sqrt{\log(2d/\eta)}\ .
\end{align*}
 We get with probability at least $1-\eta$,
\begin{align*}
\|XX{}^{\top}-nI\|_{2}\leq & c^{3}\sqrt{nd}\log(2d/\eta)\ .
\end{align*}
 When
\[
n\geq4d\log^{2}(2d/\eta)/c^{6}\ ,
\]
\[
\|XX^{\top}-nI\|_{2}\leq n/2\ .
\]

To bound $\|X\boldsymbol{\xi}\|_{2}$, according to the assumption,
the noise distribution is symmetric,
\begin{align*}
\mathbb{E}\xi_{i}\boldsymbol{x}^{(i)} & =0\\
\max\|\xi_{i}\boldsymbol{x}^{(i)}\|_{2}\leq & \xi_{\max}c\sqrt{d\log(2d/\eta)}\ .
\end{align*}
\begin{align*}
 & \max\{\|\mathbb{E}[\xi_{i}^{2}\boldsymbol{x}^{(i)}\boldsymbol{x}^{(i)}{}^{\top}]\|_{2},\|\mathbb{E}[\xi_{i}^{2}\boldsymbol{x}^{(i)}{}^{\top}\boldsymbol{x}^{(i)}]\|_{2}\}\\
\leq & c^{2}\sigma^{2}d\log(2d/\eta)\ .
\end{align*}
 Therefore, with probability at least $1-\eta$,
\begin{align*}
\|X\boldsymbol{\xi}\|_{2}\leq & c\max\{c\xi_{\max}\sqrt{d}[\log(2d/\eta)]^{3/2},c\sigma\sqrt{nd}[\log(2d/\eta)]^{3/2}\}\ .
\end{align*}
 Suppose
\begin{align*}
 & c\sigma\sqrt{nd}[\log(2d/\eta)]^{3/2}\geq c\xi_{\max}\sqrt{d}[\log(2d/\eta)]^{3/2}\\
\Leftarrow & \sigma\sqrt{n}\geq\xi_{\max}\\
\Leftarrow & n\geq(\xi_{\max}/\sigma)^{2}\ ,
\end{align*}
 we get
\[
\|X\boldsymbol{\xi}\|_{2}\leq c^{2}\sigma\sqrt{nd}[\log(2d/\eta)]^{3/2}\ .
\]

In summary, with probability at least $1-\eta$,
\begin{align*}
\|\boldsymbol{w}^{\mathrm{init}}-\boldsymbol{w}^{*}\|_{2}\leq & \|(XX^{\top})^{-1}\|\|X\boldsymbol{\xi}\|_{2}\\
\leq & c^{2}\sigma\sqrt{d}[\log(2d/\eta)]^{3/2}/\sqrt{n}
\end{align*}
 provided
\begin{align*}
 & n\geq\max\{(\xi_{\max}/\sigma)^{2},4d\log^{2}(2d/\eta)/c^{6}\}\ .
\end{align*}
\end{proof}

\section{Proof of Lemma \ref{lem:trunc-noise-level} \label{sec:Proof-of-Lemma-trunc-noise}}
\begin{proof}
From filtering step in FRR (line 3 in Algorithm \ref{alg:FRR}), $\forall\boldsymbol{x}^{(i)}\in\hat{\mathcal{S}}'$,
\begin{align*}
\tau\geq r_{i} & =\left|y_{i}-\left\langle \boldsymbol{w}^{\mathrm{init}},\boldsymbol{x}^{(i)}\right\rangle \right|\\
 & =\left|\boldsymbol{w}^{*}{}^{\top}\boldsymbol{x}^{(i)}+\hat{\xi_{i}}-\boldsymbol{w}^{\mathrm{init}}{}^{\top}\boldsymbol{x}^{(i)}\right|\\
 & \geq|\hat{\xi_{i}}|-|(\boldsymbol{w}^{*}-\boldsymbol{w}^{\mathrm{init}}){}^{\top}\boldsymbol{x}^{(i)}|\ .
\end{align*}
 Since $\boldsymbol{x}^{(i)}$ is sub-gaussian random vector independent
to $(\boldsymbol{w}^{*}-\boldsymbol{w}^{\mathrm{init}})$, apply Bernstein's
concentration, we have with probability at least $1-2\eta$,
\[
|(\boldsymbol{w}^{*}-\boldsymbol{w}^{\mathrm{init}}){}^{\top}\boldsymbol{x}^{(i)}|\leq c\Delta\sqrt{\log(2/\eta)}\ .
\]
 Therefore we get
\[
|\hat{\xi_{i}}|\leq r_{i}+c\Delta\sqrt{\log(2/\eta)}\ .
\]
\end{proof}

\end{document}